\setlist[itemize]{leftmargin=0.5cm,itemsep=0pt}
\DeclareMathOperator*{\argmax}{arg\,max}
\newcommand{\piref}{\pi^{\mathrm{ref}}}
\newcommand{\KL}{\text{KL}}
\newcommand{\MixtureParam}{\beta}
\theoremstyle{plain}
\newtheorem{theorem}{Theorem}[section]
\newtheorem{proposition}[theorem]{Proposition}
\newtheorem{lemma}[theorem]{Lemma}
\theoremstyle{definition}
\theoremstyle{remark}
\newtheorem{remark}[theorem]{Remark}
\icmltitlerunning{Human Alignment of Large Language Models through Online Preference Optimisation}
\begin{document}

\twocolumn[
\icmltitle{Human Alignment of Large Language Models through \\ Online Preference Optimisation}

\icmlsetsymbol{equal}{*}

\begin{icmlauthorlist}
\icmlauthor{Daniele Calandriello}{equal,dm}
\icmlauthor{Daniel Guo}{equal,dm}
\icmlauthor{R{\'e}mi Munos}{dm}
\icmlauthor{Mark Rowland}{dm}
\icmlauthor{Yunhao Tang}{dm}
\icmlauthor{Bernardo Avila Pires}{dm}
\icmlauthor{Pierre Harvey Richemond}{dm}
\icmlauthor{Charline Le Lan}{dm}
\icmlauthor{Michal Valko}{dm}
\icmlauthor{Tianqi Liu}{dm}
\icmlauthor{Rishabh Joshi}{dm}
\icmlauthor{Zeyu Zheng}{dm}
\icmlauthor{Bilal Piot}{dm}

\end{icmlauthorlist}

\icmlaffiliation{dm}{Google DeepMind}

\icmlcorrespondingauthor{Daniele Calandriello}{dcalandriello@google.com}

\icmlkeywords{Machine Learning, ICML, RLHF, RL, reinforcement learning, policy optimisation}

\vskip 0.3in
]

\printAffiliationsAndNotice{\icmlEqualContribution} 

\begin{abstract}
Ensuring alignment of language models' outputs with human preferences is critical to guarantee a useful, safe, and pleasant user experience. Thus, human alignment has been extensively studied recently and several methods such as Reinforcement Learning from Human Feedback (RLHF), Direct Policy Optimisation (DPO) and Sequence Likelihood Calibration (SLiC) have emerged. In this paper, our contribution is two-fold. First, we show the equivalence between two recent alignment methods, namely Identity Policy Optimisation (IPO) and Nash Mirror Descent (Nash-MD). Second, we introduce a generalisation of IPO, named IPO-MD, that leverages the regularised sampling approach proposed by Nash-MD. 

This equivalence may seem surprising at first sight, since IPO is an offline method whereas Nash-MD is an online method using a preference model. However, this equivalence can be proven when we consider the online version of IPO, that is when both generations are sampled by the online policy and annotated by a trained preference model. Optimising the IPO loss with such a stream of data becomes then equivalent to finding the Nash equilibrium of the preference model through self-play. Building on this equivalence, we introduce the IPO-MD algorithm that generates data with a mixture policy (between the online and reference policy) similarly as the general Nash-MD algorithm. We compare online-IPO and IPO-MD to different online versions of existing losses on preference data such as DPO and SLiC on a summarisation task.   
\end{abstract}

\section{Introduction}
Learning from feedback is a common approach to align the behaviour of artificial agents with human preferences \citep{knox2008tamer,griffith2013policy,christiano2017deep,warnell2018deep}. In recent years, reinforcement learning from human feedback has become a common paradigm for fine-tuning large language models \citep{glaese2022improving,chatgpt}.

The traditional approach to fine-tuning of large language models from human preferences is to learn a reward signal under the Bradley-Terry model \citep{bradley1952rank}, and then perform reinforcement learning (RL) against this learnt reward signal \citep{christiano2017deep}. Recently, \citet{rafailov2023direct} proposed an alternative model-free approach, direct policy optimisation (DPO). DPO is mathematically equivalent to the method above, in the sense that the minimiser of the population loss is identical~\citep{azar2023general}, yet DPO bypasses the learning of the reward signal.
Both approaches, however, rely on the Bradley-Terry model.

Recently, two particular approaches to directly optimise against preference probabilities themselves, rather than a Bradley-Terry-derived reward function have been proposed.
Identity preference optimisation \citep[IPO]{azar2023general} is an algorithm that aims to optimise preference probabilities against a fixed data distribution, and does so with an offline contrastive loss, as with DPO.
By contrast, Nash-MD-PG \citep{munos2023nash} is an algorithm that aims to find a Nash equilibrium with respect to the preference probabilities, via online policy gradient updates against a regularised policy. Both algorithms have appealing properties, though on the face of it are unrelated: one is an offline contrastive algorithm optimising against a fixed policy, the other is an online algorithm aiming to find a Nash equilibrium.

In this work we bridge the gap between IPO and Nash-MD-PG, and use this theoretical bridge to propose a novel class of preference optimisation algorithms. 
Specifically, our principal contributions are: first, we identify several key factors of variation between IPO and Nash-MD-PG, including their use of offline/online data, contrastivity of their losses, and the nature of their equilibria. We use this understanding to identify the strengths of these algorithms, and combine these strengths into new preference optimisation algorithms.
This allows us to propose \emph{Online IPO}, an online variant of IPO. In addition, we establish a theoretical connection between Online IPO and \emph{self-play} in the regularised two-player preference game used in deriving Nash-MD-PG. Second, motivated by Online IPO, we propose a preference optimisation algorithm, aiming to capture the best aspects of both IPO and Nash-MD-PG: \emph{IPO-MD}, a version of IPO that interpolates between offline and online variants by using the lagged data distribution of Nash-MD-PG. Finally, we provide an experimental suite contrasting these algorithms in several applications, which provides detailed comparisons between the proposed methods and several baselines, with notable take-aways for practitioners.

\section{Background}
\label{sec:background}

We begin by introducing the central preference optimisation problem, and relevant prior work.

\subsection{Preference optimisation in bandits}

We consider a (non-contextual) bandit problem (rather than a sequential setting), with finite action space $\mathcal{Y}$.
This simplifies the notation considerably, and the ideas presented are straightforwardly extensible to the contextual/sequential setting where the actions/generations $y$ are conditioned on a state/prompt $x$; we explain this in further detail in the descriptions of implementation details in the experiments.

\textbf{Preferences.}
A \emph{preference function} $p : \mathcal{Y} \times \mathcal{Y} \rightarrow [0,1]$ specifies pairwise preference probabilities between elements of $\mathcal{Y}$. Specifically, given $y, y' \in \mathcal{Y}$, $p(y \succ y')$ is the probability that $y$ is preferred to $y'$. We will assume that preference functions satisfy the symmetry condition $p(y' \succ y) = 1 - p(y \succ y')$.

At a high level, the goal of learning in this context is to find a policy in $\Delta(\mathcal{Y})$ (the set of distributions over $\mathcal{Y}$) that tends to select actions $y \in \mathcal{Y}$ that are preferred over others. This is of course not a precise mathematical objective as stated, and there are several distinct ways in which this can be formalised, which we explore in greater detail below.

\textbf{Data model.}
Actions are sampled from policies $\mu, \mu' \in \Delta(\mathcal{Y})$ as 
$Y \sim \mu, Y' \sim \mu'$ (we write $Y, Y' \sim \mu$ when $\mu = \mu'$).
Given a preference function $p$, the \emph{preference distribution} $\lambda_p$ is defined for $y, y' \in \mathcal{Y}$ as the distribution corresponding to the following sampling procedure:
\[
\lambda_p(y, y') \mbox{ yields } \begin{cases}
(y, y') \mbox{ with probability } p(y \succ y') \\
(y', y) \mbox{ with probability } 1 - p(y \succ y') \, .
\end{cases}
\]

We will frequently make use of a collection of samples drawn from the data-generating policies and the preference distribution, which we denote by $(y^+_i, y^-_i)_{i=1}^N$.
In this case we will say the data is sampled from $(\mu, \mu', \lambda_p)$ (or $(\mu, \lambda_p)$ when $\mu = \mu'$).
In \emph{offline} settings, the data is typically generated from a fixed policy $\mu$, whereas in \emph{online} settings, new data can be generated from a currently estimated policy $\pi$.

Another important aspect of the data available to the learner is an initial policy $\piref \in \Delta(\mathcal{Y})$. Typically, this policy provides acceptable behaviour in some (though not all) aspects of interest in the problem, and may be used to define a trust region for the policy optimisation problem. Our central contributions in this paper draw together, unify, and contrast several different approaches to preference optimisation; we now recall these below.

\subsection{RLHF with a Bradley-Terry reward model}
\label{sec:rlhf}

The approach to preference optimisation proposed by \citet{christiano2017deep} is split into two steps. First, a reward model $r : \mathcal{Y} \rightarrow \mathbb{R}$ is fitted via the Bradley-Terry model \citep{bradley1952rank}. In more detail, the preference probability $p(y \succ y')$ is approximated by the logistic function
\begin{align*}
    \sigma(r(y) - r(y')) =
    \frac{e^{r(y) - r(y')}}{1 + e^{r(y) - r(y')}} \, ,    
\end{align*}
and a reward function $\hat{r}$ is learnt essentially by performing maximum likelihood in this model, given a collection of observed preferences $(y^+_i, y^-_i)_{i=1}^N$ sampled from $(\mu,\lambda_p)$, (approximately) maximising the objective
\begin{align}\label{eq:reward-obj}
    \frac{1}{N}\sum_{i=1}^N \log
        \left(
            \sigma(r(y^+_i) - r(y^-_i))
        \right) \, .
\end{align}
Policy optimisation then proceeds by aiming to maximise the expected reward $\hat{r}$ under the $\pi$, subject to a KL constraint against the initial policy:
\begin{align}\label{eq:rlhf}
    \argmax_{\pi \in \Delta(\mathcal{Y})} \ \bigg[ \mathop{\mathbb{E}}_{Y \sim \pi}[\hat{r}(Y)] - \tau \text{KL}( \pi \; || \; \piref) \bigg] \, ,
\end{align}
with $\tau > 0$ a temperature parameter controlling the degree of regularisation towards $\piref$. Note that the solution $\pi^*$ to Equation~\eqref{eq:rlhf} is available in closed form, as
\begin{align}\label{eq:rlhf-closed-form}
    \pi^*(y) \propto \piref(y) \exp( \tau^{-1} \hat{r}(y)) \, .
\end{align}

\subsection{Direct preference optimisation (DPO)}
\label{sec:dpo}

\citet{rafailov2023direct} propose \emph{direct policy optimisation} (DPO) as an alternative to RLHF as described above, noting that with the closed form in Equation~\eqref{eq:rlhf-closed-form}, the learning of a reward function can be completely bypassed, instead reparametrising the optimal reward in terms of the optimal policy, substituting into Equation~\eqref{eq:reward-obj}, and aiming to maximise the resulting objective with respect to $\pi$:
\begin{align}\label{eq:reward-obj-2}
    \frac{1}{N}\sum_{i=1}^N \log
        \left(
            \sigma
                \left(
                    \tau \; \log\left(
                                \frac{\pi(y^+_i) \piref(y^-_i)}
                                     {\pi(y^-_i) \piref(y^+_i) }
                                \right)
                \right)
        \right) \, .
\end{align}

The derivation of DPO implies that, mathematically, it yields the same optimal policy as the RLHF approach in Section~\ref{sec:rlhf}, which is a regularised optimiser for the reward model $\hat{r}$.

\subsection{Sequence Likelihood Calibration (SLiC)}
\label{sec:slic}

\citet{zhao2023slichf} propose \emph{sequence likelihood calibration} (SLiC) as an alternative to RLHF. Soon after, \citet{liu2023statistical} refine the SLiC loss by normalising the policy probabilities with the reference policy probabilities in order to get a regularised offline loss. In the remaining, we will refer to the following loss, with a dataset $(y^+_i, y^-_i)_{i=1}^N$ sampled from $(\mu, \lambda_p)$:
\begin{align}\label{eq:offline-slic}
   \frac{1}{N}\sum_{i=1}^N
   \max\left(0, 1 - \tau\log \bigg( \frac{\pi(y^+_i)\piref(y^-_i)}{\pi(y^-_i)\piref(y^+_i)} \bigg) \right) \, 
\end{align}
as the SLiC loss. This loss can be interpreted as an hinge-loss variation of DPO~\citep{liu2023statistical}.

\subsection{Identity policy optimisation (IPO)}
\label{sec:offline-ipo}

\citet{azar2023general} note that in general, optimisation of the reward model described in Section~\ref{sec:rlhf}, which is implicitly optimised by DPO, may not always yield an intuitively good policy for the preference probabilities $p$. They also note that in practice, removal of the learnt reward function from the pipeline removes an important source of regularisation in the learning problem, and as such DPO may learn policies that are under-regularised, and converge to deterministic actions.

In order to circumvent these two issues, \citet{azar2023general} propose \emph{identity policy optimisation} (IPO). The derivation begins from the objective of aiming to directly optimise preference probabilities (rather than a proxy reward) against a fixed policy $\mu$:
\begin{align}\label{eq:rpo}
    \mathop{\mathbb{E}}_{\substack{Y \sim \pi \\ Y' \sim \mu}}[p(Y \succ Y')] - \tau \KL(\pi \; || \; \piref) \, .
\end{align}
Similar to Equation~\eqref{eq:rlhf-closed-form}, the optimal policy for this objective is expressible directly as
\begin{align}\label{eq:ipo-opt}
    \pi^*(y) \propto \piref(y) \exp( \tau^{-1} \mathbb{E}_{Y' \sim \mu}[p(y \succ Y')] ) \, ,
\end{align}
which \citet{azar2023general} use to derive the following equivalent offline IPO loss, with a dataset $(y^+_i, y^-_i)_{i=1}^N$ sampled from $(\mu, \lambda_p)$:
\begin{align}\label{eq:offline-ipo}
   \frac{1}{N}\sum_{i=1}^N
    \left( \log \bigg( \frac{\pi(y^+_i)\piref(y^-_i)}{\pi(y^-_i) \piref(y^+_i)} \bigg) -  \tau^{-1}/2 \right)^2 \, .
\end{align}
The quadratic aspect of the loss discourages log-probability ratios between pairs of actions under $\pi$ from deviating too far from those under $\piref$, which ensures regularisation of $\pi$ against $\piref$. By contrast, the minimiser of the DPO empirical loss does not have this property.

\subsection{Nash-MD-PG}
\label{sec:nash-md-pg}

Rather than optimising preference probabilities against an offline dataset generated from some data-generating policy $\mu$, \citet{munos2023nash} propose instead interpreting the objective in Equation~\eqref{eq:rpo} as one player's objective in a two-player, constant-sum game. 
Specifically, two players select policies $\pi_1, \pi_2 \in \Delta(\mathcal{Y})$, with player $i$ receiving payoff
\begin{align}\label{eq:payoff}
    & \mathop{\mathbb{E}}_{\substack{Y \sim \pi_i \\ Y' \sim \pi_{-i}}}[p(Y \succ Y')] \\
    & \qquad- \tau \KL(\pi_i \; || \; \piref) + \tau \KL(\pi_{-i} \; || \; \piref ) \, , \nonumber
\end{align}
where $\pi_{-i}$ denotes the policy of the other player.
Note that holding $\pi_{-i}$ fixed at $\mu$ then yields an equivalent objective to Equation~\eqref{eq:rpo} for $\pi_i$. The proposal of \citet{munos2023nash} is then to find a Nash equilibrium for this game, motivated by the idea that the policies in the resulting Nash equilibrium may be more robust, and are not overly specific to the data-generating distribution $\mu$.
On the flip-side there may be some benefit to regularising the sampling distribution toward the data distribution, and Nash-MD-PG has a parameter $\MixtureParam$ that allows for this tradeoff, with self-play in one extreme $(\MixtureParam=0)$ and sampling from $\mu$ in the other ($\MixtureParam=1$). We denote this algorithm by Nash-MD-PG($\MixtureParam$).

The Nash-MD-PG($\MixtureParam$) algorithm, motivated by mirror-descent approaches to saddle-point computation, aims to do so by updating the policy $\pi$ in the direction of the following policy gradient:
\begin{align*}
    \nabla \log \pi(y) \left(
        p(y \succ y') - \tfrac{1}{2} - \tau \log \left( \frac{\pi(y)}{\piref(y)} \right)
    \right) \, ,
\end{align*}
where importantly $y \sim \pi$, and $y'$ is sampled from a \emph{geometric mixture policy} $\pi^{1-\MixtureParam} (\piref)^{\MixtureParam}$, for some choice of $\MixtureParam \in [0,1]$.

\section{Comparative discussion of preference optimisation algorithms}

The algorithms DPO (\cref{sec:dpo}), SLiC (\cref{sec:slic}), IPO (\cref{sec:offline-ipo}), and Nash-MD-PG (\cref{sec:nash-md-pg}) are distinct along a number of axes.

\textbf{Contrastivity.}
IPO is a \emph{contrastive} algorithm, in that it labels a pair $(Y, Y')$ according to the preference function (via $\lambda_p$) into a positive (preferred) and a negative (not preferred) example $(Y^+, Y^-)$, and then updates the policy via gradients flowing through \emph{both samples}, $\pi(Y^+)$ and $\pi(Y^-)$. 
By contrast, Nash-MD-PG is not contrastive; only the sampled action's policy probability is directly updated based on the preference.
Contrastive algorithms in general have the potential to be more data-efficient, making direct use of both samples in each policy update (see Appendix~\ref{sec:contrastive.vs.non.contrastive} for a condition under which a contrastive gradient estimate has lower variance than its non-contrastive counterpart).

\textbf{Offline/online data.}
IPO is an \emph{offline} algorithm, working with a fixed dataset, while Nash-MD-PG is an \emph{online} algorithm that makes use of sampled actions from both the current estimated policy $\pi$, and a geometric mixture $\pi^{1-\MixtureParam} (\piref)^{\MixtureParam}$. In many settings it may be desirable to work with static, offline data sets, although if it is feasible to gather data online, this can be beneficial from the point of view of effective regularisation.
In the offline setting, the data limits our ability to evaluate the quality of learned policies in terms of preferences, which can lead to learning bad policies that choose actions outside the data distribution, but have low empirical loss.
Similar to offline RL \citep{fujimoto2019off}, this can be mitigated through regularisation to a reference policy $\piref$, possibly related to the sampling distribution $\mu$ \citep{jaques2019way,wu2019behavior}.
An alternative to regularisation is to use online data, and to train $\pi$ on data that is close to what $\pi$ generates.
Online data may not be available in all settings, but, when it is, it can be an effective way to improve performance of learned policies. 

\textbf{Equilibria.}
The IPO loss is a supervised objective. In particular, its data distribution is fixed, and the optimiser is given by the closed-form policy in Equation~\eqref{eq:ipo-opt}, which in particular can be interpreted as a policy that is preferred over the data-generating distribution $\mu$, regularised towards $\piref$. By contrast, Nash-MD-PG is a game-theoretic algorithm, with a loss function whose data distribution and objective change as the estimated policy change themselves. The stationary points for Nash-MD-PG are not defined in a closed-form manner against reference and data policies $\piref$ and $\mu$, but in a self-referential manner.

\textbf{Regularised Sampling.}
As discussed in \cref{sec:nash-md-pg}, Nash-MD-PG allows for sampling from a mixture distribution between $\pi$ and the data-generating distribution $\mu$, and this can also lead to improved performance versus sampling from either policy.

It is clear that there are several combinations of the various properties of previous methods for which no algorithm yet exists, including combinations that could have advantages over previous work, for example an online contrastive method.
\cref{tab:methods} gives an overview of existing methods in terms of what we consider are strengths of these methods, and how the methods introduced in this paper fit in this context, combining these strengths.
\begin{table*}[htb!]
    \centering
    \begin{tabular}{lcccc}
    \toprule
    Method & Contrastive & Online & Regularised Sampling \\
    \midrule
        RLHF \citep{christiano2017deep} &  & \checkmark & \\
        DPO \citep{rafailov2023direct} & \checkmark  &  \\
        SLiC \citep{zhao2023slichf} & \checkmark &  & \\
        IPO \citep{azar2023general} & \checkmark &  & \\
        Self-play (Nash-MD-PG($\MixtureParam=0$)) \citep{munos2023nash} &  & \checkmark &  \\
        Nash-MD-PG \citep{munos2023nash} & & \checkmark & \checkmark  \\
        Online-IPO (\cref{sec:online-ipo}) &  \checkmark & \checkmark & \\
        IPO-MD (\cref{sec:ipo-md}) & \checkmark & \checkmark & \checkmark \\
    \bottomrule
    \end{tabular}
    \caption{Method comparison in terms of their properties.
    }
    \label{tab:methods}
\end{table*}

\section{Online IPO}
\label{sec:online-ipo}

We first aim to bridge the online/offline divide between IPO and Nash-MD-PG, by proposing a new variant of IPO, \emph{Online IPO}, which makes use of an online, shifting data distribution.

\subsection{Algorithm}

To derive an update, we first start with the \emph{population} loss for IPO, which is obtained by taking the minibatch loss in Equation~\eqref{eq:offline-ipo}, and taking an expectation over the dataset (under i.i.d.\ sampling from $(\mu, \lambda_p)$). This yields the (offline) IPO population loss
\begin{align}
    \mathop{\mathbb{E}}_{\substack{Y, Y' \sim \textcolor{red}{\mu} \\ Y^+, Y^- \sim \lambda_p(Y, Y')}}\bigg\lbrack \left( \log \bigg( \frac{\pi(Y^+)\piref(Y^-)}{\pi(Y^-) \piref(Y^+)} \bigg) - \tau^{-1}/2 \right)^2 \bigg\rbrack \, .
\end{align}
The \emph{Online IPO} population loss is given by replacing the static data distribution, highlighted in red above, with the data distribution generated by the current policy, as displayed below
\begin{align}\label{eq:online-ipo}
   \mathop{\mathbb{E}}_{\substack{Y, Y' \sim \textcolor{red}{\texttt{SG}[\pi]} \\ Y^+, Y^- \sim \lambda_p(Y, Y')}}\bigg\lbrack \left( \log \bigg( \frac{\pi(Y^+)\piref(Y^-)}{\pi(Y^-) \piref(Y^+)} \bigg) - \tau^{-1}/2 \right)^2 \bigg\rbrack \, .
\end{align}
Here, $\texttt{SG}[\pi]$ denotes a stop-gradient around $\pi$ in the data distribution, meaning that although we generate data from $\pi$ to construct the loss, we do not differentiate through the data-generation process itself.

The population form of the Online IPO loss will be useful in the analysis that follows. We conclude our description of the approach by noting that the sample-based Online IPO loss coincides with the Offline IPO\footnote{For clarity, we will refer to the original formulation of IPO by \citet{azar2023general} as \emph{Offline IPO} in what follows.} loss in Equation~\eqref{eq:offline-ipo}, with the exception that the samples $(y^+_i, y^-_i)_{i=1}^N$ are drawn from the current policy $\pi$.

\subsection{Analysis}

Before studying the performance of the newly derived Online IPO loss empirically, we pause to consider it from a theoretical perspective. In particular, we aim to understand for which policies this loss is stationary.

By the analysis for (Offline) IPO \citep{azar2023general} summarised in Section~\ref{sec:offline-ipo}, the gradient for the Online IPO loss in Equation~\eqref{eq:online-ipo} is zero iff $\pi$ satisfies
\begin{align}\label{eq:online-fixed-point}
    \pi(y) \propto \piref(y) \exp(\tau^{-1} p(y \succ \pi)) \, ,
\end{align}
where we use the shorthand $p(y \succ \pi) = \mathbb{E}_{Y' \sim \pi}[p(y \succ Y')]$. 
This is a fixed-point condition; note that $\pi$ appears on both sides. This in fact says that $\pi$ is a best-response against itself in the regularised game described in the background for Nash-MD-PG in Section~\ref{sec:nash-md-pg}. Hence, if $\pi$ satisfies Equation~\eqref{eq:online-fixed-point}, it must be the Nash equilibrium for this game; we have shown the following.

\begin{proposition}
\label{prop:online-ipo-nash}
    The minimiser of the online IPO objective is the Nash equilibrium of the regularised game described in Equation~\eqref{eq:payoff}.
\end{proposition}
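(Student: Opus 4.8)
The plan is to establish the claim in two movements: first, to pin down the first-order stationarity condition of the Online IPO loss in \eqref{eq:online-ipo}; second, to show that this condition is identical to the defining condition of the Nash equilibrium of the game in \eqref{eq:payoff}. The first movement leans directly on the Offline IPO analysis of \citet{azar2023general}. The crucial observation is the stop-gradient $\texttt{SG}[\pi]$ in the data distribution: when we differentiate \eqref{eq:online-ipo} with respect to $\pi$, the sampling distribution is frozen at the current policy, so the computation is word-for-word the Offline IPO gradient computation with the data policy $\mu$ instantiated as $\pi$. Since the Offline IPO population loss with data policy $\mu$ is minimised exactly at the closed form \eqref{eq:ipo-opt}, substituting $\mu = \pi$ yields that the Online IPO gradient vanishes precisely at the self-referential fixed point \eqref{eq:online-fixed-point}, namely $\pi(y) \propto \piref(y)\exp(\tau^{-1} p(y \succ \pi))$.

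For the second movement, I would characterise the Nash equilibrium of \eqref{eq:payoff} directly. The game is symmetric and constant-sum: summing the two players' payoffs and using the symmetry condition $p(y \succ y') + p(y' \succ y) = 1$ shows the total payoff is the constant $1$. Holding the opponent policy $\pi_{-i}$ fixed, the best response of player $i$ maximises $\mathbb{E}_{Y \sim \pi_i, Y' \sim \pi_{-i}}[p(Y \succ Y')] - \tau \KL(\pi_i \,\|\, \piref)$, since the opponent's KL term is constant in $\pi_i$; this is exactly the objective \eqref{eq:rpo} with $\mu = \pi_{-i}$, whose maximiser is \eqref{eq:ipo-opt} with $\mu = \pi_{-i}$. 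A Nash equilibrium is a pair of mutual best responses, and by symmetry, together with uniqueness of the regularised equilibrium established by \citet{munos2023nash}, it has the form $(\pi^\star, \pi^\star)$. Imposing the best-response identity with $\pi_{-i} = \pi^\star$ reproduces exactly \eqref{eq:online-fixed-point}, so the Nash equilibrium and the Online IPO stationary policy solve the same fixed-point equation.

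To conclude, I would combine the two movements: both the minimiser of \eqref{eq:online-ipo} and the Nash equilibrium of \eqref{eq:payoff} are characterised by \eqref{eq:online-fixed-point}, and this fixed point is unique. Uniqueness follows from the strict concavity in $\pi_i$ of the KL-regularised best-response objective (a linear-in-$\pi_i$ preference term minus the strictly convex regulariser $\tau \KL(\pi_i \,\|\, \piref)$) together with the contraction property of the induced best-response map, as in \citet{munos2023nash}. Hence the two policies coincide.

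The main obstacle is the first movement, specifically the passage from \emph{gradient equals zero} to \emph{is the minimiser}. Because the objective in \eqref{eq:online-ipo} has a data distribution that depends on the optimisation variable, it is not transparently convex in $\pi$, so stationarity does not by itself single out the global minimiser. The stop-gradient is what rescues the argument: it decouples the loss-bearing occurrences of $\pi$ from the sampling occurrences, letting me reuse the Offline IPO optimality characterisation pointwise in $\pi$, and the uniqueness result of \citet{munos2023nash} then guarantees that the resulting fixed point is the genuine minimiser rather than a spurious critical point. Checking that these two appeals to uniqueness are in fact the same statement, and that no other stationary policies can exist, is the part that warrants the most care.
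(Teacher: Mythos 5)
Your proposal is correct and follows essentially the same route as the paper: use the stop-gradient to reduce the Online IPO stationarity condition to the Offline IPO optimality characterisation with $\mu = \pi$, obtain the fixed-point condition in Equation~\eqref{eq:online-fixed-point}, and identify this as the self-best-response (hence Nash equilibrium) condition for the game in Equation~\eqref{eq:payoff}. Your additional care about uniqueness and about distinguishing stationary points from the genuine minimiser goes slightly beyond the paper's brief inline argument, but does not change the approach.
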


This is perhaps a surprising conclusion. Offline IPO is not motivated by game-theoretic considerations, yet by moving to an online variant, we have obtained a loss whose stationary point is precisely the Nash equilibrium of the preference game optimised by Nash-MD-PG.
In fact, we can go further, and deduce a direct equivalence of expected updates between Online IPO, and \emph{self-play} in this game.
The proof for the following result is given in Appendix~\ref{sec:proofs}.

\begin{restatable}{proposition}{thmOnlineIPOSelfPlay}\label{thm:OnlineIPOSelfPlay}
    The expected gradient of the Online IPO loss in Equation~\eqref{eq:online-ipo} is identical to the self-play update direction in the game with payoff as in Equation~\ref{eq:payoff}.
\end{restatable}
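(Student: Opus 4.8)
The plan is to differentiate the population Online IPO loss of Equation~\eqref{eq:online-ipo} directly and reshape the result into the expected self-play update, exploiting both the stop-gradient and the fact that the two generations are i.i.d.\ draws from $\pi$. First I would abbreviate $\rho(y) := \log(\pi(y)/\piref(y))$, so that the term inside the square reads $\rho(Y^+) - \rho(Y^-) - \tau^{-1}/2$. Because of the $\texttt{SG}[\pi]$ annotation the sampling distribution is frozen, so the only differentiated $\pi$-dependence sits inside this bracket; applying the chain rule to the square gives
\begin{align*}
\nabla L = 2\,\mathbb{E}\big[(\rho(Y^+) - \rho(Y^-) - \tau^{-1}/2)(\nabla\log\pi(Y^+) - \nabla\log\pi(Y^-))\big],
\end{align*}
with the expectation taken over $Y, Y' \sim \pi$ and $(Y^+, Y^-) \sim \lambda_p(Y, Y')$, and using $\nabla\rho = \nabla\log\pi$.

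Next I would perform the expectation over the preference labelling $\lambda_p$ in closed form. Conditioning on $Y = y$, $Y' = y'$, the label equals $(y, y')$ with probability $p(y \succ y')$ and $(y', y)$ otherwise; substituting both cases and collecting terms shows that the constant offset $\tau^{-1}/2$ combines with the preference probability so that the bracketed factor becomes $\rho(y) - \rho(y') - \tau^{-1}(p(y \succ y') - \tfrac{1}{2})$, still multiplying $(\nabla\log\pi(y) - \nabla\log\pi(y'))$. This step, which turns the symmetrically-labelled contrastive pair into an asymmetric preference term, is where I expect the main bookkeeping obstacle: one must track signs carefully under the swap $(y, y') \mapsto (y', y)$, invoking the symmetry assumption $p(y' \succ y) = 1 - p(y \succ y')$.

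Finally I would use the i.i.d.\ structure to collapse the contrastive estimator into a single-sample policy gradient. Relabelling $Y \leftrightarrow Y'$ reveals the integrand is symmetric, so the $\nabla\log\pi(Y^-)$ contribution matches the $\nabla\log\pi(Y^+)$ one and the difference $\nabla\log\pi(Y) - \nabla\log\pi(Y')$ may be replaced by $2\,\nabla\log\pi(Y)$. The resulting cross term $\mathbb{E}[\nabla\log\pi(Y)\,\rho(Y')]$ then factorises over the independent draws and vanishes, since $\mathbb{E}_{Y\sim\pi}[\nabla\log\pi(Y)] = \sum_y \nabla\pi(y) = 0$. What survives is
\begin{align*}
\nabla L = -4\tau^{-1}\,\mathbb{E}_{Y, Y' \sim \pi}\big[\nabla\log\pi(Y)\,\big(p(Y \succ Y') - \tfrac{1}{2} - \tau\,\rho(Y)\big)\big],
\end{align*}
whose bracket is exactly the self-play ($\MixtureParam = 0$) policy gradient of Section~\ref{sec:nash-md-pg}. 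Hence the Online IPO descent direction $-\nabla L$ equals the self-play ascent direction up to the positive scalar $4\tau^{-1}$, establishing the claim.
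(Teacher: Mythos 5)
Your proposal is correct and follows essentially the same route as the paper's proof: a direct computation of the population gradient that uses the preference symmetry $p(y' \succ y) = 1 - p(y \succ y')$, the exchangeability of the two i.i.d.\ draws to collapse the contrastive difference into a single score term, and the identity $\mathbb{E}_{Y\sim\pi}[\nabla\log\pi(Y)] = 0$ to kill the cross term, before matching the result to the self-play update. The only cosmetic differences are organisational (you perform the $\lambda_p$ expectation conditionally and then symmetrise, whereas the paper expands the double sum and splits the $\tau^{-1}$ and non-$\tau$ terms), and your overall positive constant $4\tau^{-1}$ versus the paper's $2\tau^{-1}$ reflects only a normalisation convention in how the labelling expectation is written, which is immaterial to the claim about the update direction.
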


Here, \emph{self-play} refers to the algorithm in which a policy is updated using gradient ascent on its expected payoff in the game described in Equation~\eqref{eq:payoff}, against another player using the same policy:
\begin{align*}
     \nabla_\pi \Bigg( \mathop{\mathbb{E}}_{Y \sim \pi, Y' \sim \texttt{SG}[\pi]}\bigg[p(Y \succ Y') - \tau \KL(\pi \; || \; \piref ) \bigg] \Bigg) \, .
\end{align*}
Note that in expectation, this corresponds to Nash-MD-PG with $\MixtureParam=0$; however, an important difference is that in Online IPO, updates are \emph{contrastive}, which may result in variance reduction of the gradient estimate.

We have therefore established a close connection between Online IPO and Nash equilibria for the regularised game.

\subsection{Online DPO}
\label{subsec:online-dpo}

As the DPO and SLiC losses are similar to IPO, a natural question is whether online variants of DPO and SLiC are also related to the regularised game given \cref{eq:payoff}. 
We explore this question for online DPO in \cref{sec:online-dpo-supp}.
\Cref{lem:online-dpo-kkt} gives conditions for the stationary point of the regularised game and online IPO to be a stationary point of online DPO.
However, the conditions seem difficult to satisfy: For example, \cref{thm:two-class-nash-is-not-dpo-solution} says that there is no 2-action problem for which the condition is satisfied, except when preferences are uniform ($p(1 \succ 2) = \frac{1}{2}$).
In this sense, apart from the trivial uniform-preference case, online DPO and online IPO are different objectives when $|\mathcal{Y}| = 2$.

As for stationary points of online DPO, we show that, under the Bradley-Terry model assumption, the RLHF solution (\cref{eq:rlhf-closed-form}) is a stationary point of online DPO  (\cref{thm:online-dpo-kkt}), coinciding with that of offline DPO.

\section{IPO-MD}
\label{sec:ipo-md}

Having established a connection between Online IPO and self-play, it is natural to consider whether we can improve on self-play by using regularised policies to generate the data, similar to how Nash-MD optimises preferences against a regularised adversary. 

\subsection{Algorithm}

We consider modulating the data distribution used in the IPO loss using the same geometric mixture ($\pi^{1-\MixtureParam} (\piref)^{\MixtureParam}$) between online and reference policies as in Nash-MD, we arrive at a new family of algorithms that we call IPO-MD that directly corresponds to the family of policies for Nash-MD-PG. This leads to the population loss
\begin{align*}
    \mathop{\mathbb{E}}_{\substack{Y, Y' \sim \textcolor{red}{\texttt{SG}[\pi^{1-\MixtureParam} (\piref)^{\MixtureParam}]} \\ Y^+, Y^- \sim \lambda_p(Y, Y')}}\bigg\lbrack \left( \log \bigg( \frac{\pi(Y^+)\piref(Y^-)}{\pi(Y^-) \piref(Y^+)} \bigg) - \tau^{-1}/2 \right)^2 \bigg\rbrack \, .
\end{align*}
where in the data distribution, we write $\pi^{1-\MixtureParam} (\piref)^{\MixtureParam}$ as shorthand for the policy which is given by normalising the (unnormalised) geometric mixture $\pi^{1-\MixtureParam}(y) (\piref)^{\MixtureParam}(y)$. For $\MixtureParam \in [0,1]$, we obtain an algorithm that interpolates between these two in a certain sense, analogous to Nash-MD-PG \citep{munos2023nash}; we call this algorithm IPO-MD($\MixtureParam$). When $\MixtureParam=0$, we obtain Online IPO, i.e. self-play, and when $\MixtureParam=1$, we get a variation of IPO that tries to improve against a fixed policy $\piref$. An interesting observation is that if we use a slightly different mixture policy that mixes with $\mu$, $\pi^{1-\MixtureParam} (\mu)^{\MixtureParam}$, then this actually interpolates between Online IPO and Offline IPO, where for $\MixtureParam=1$ we get back the Offline IPO objective. In practice it is often difficult to get direct access to $\mu$ so we would not be able to form this geometric mixture.  Some example dynamics in tabular settings are plotted in Appendix~\ref{sec:dynamics}.

\subsection{Analysis}

We now compare Nash-MD-PG with the new algorithm class IPO-MD described above. First, these two classes of algorithms are practically different. Nash-MD-PG is on-policy, in that the only gradient contributions appearing in its update are those corresponding to actions sampled under the current policy. By contrast, IPO-MD is an off-policy algorithm, since it updates its current policy based on actions sampled under $\pi^{1-\MixtureParam}(\piref)^{\MixtureParam}$. 

By the analysis of Offline IPO \citep{azar2023general}, we have that any fixed point $\pi^*_\MixtureParam$ of IPO-MD($\MixtureParam$) must satisfy
\begin{align}\label{eq:ipo-md-fp}
    \pi^*_\MixtureParam(y)\! \propto \! \piref(y) \exp\big( \tau^{-1} p(y \succ (\pi^*_\MixtureParam)^{1-\MixtureParam} (\piref)^{\MixtureParam})\big) \, .
\end{align}
In other words, $\pi^*_\MixtureParam$ is a best-response against $(\pi^*_\MixtureParam)^{1-\MixtureParam} (\piref)^{\MixtureParam}$ in the regularised game described in Equation~\eqref{eq:payoff}. But from the description of the Nash-MD-PG($\MixtureParam$) algorithm by \citet{munos2023nash}, we have that a policy $\pi^*$ is stationary under this algorithm's update iff
\begin{align*}
    \pi^*(y) \propto \piref(y) \exp\big( \tau^{-1} p(y \succ (\pi^*)^{1-\MixtureParam} (\piref)^{\MixtureParam})\big) \, .
\end{align*}
Hence, the fixed point of IPO-MD$(\MixtureParam)$ coincides with the fixed point of Nash-MD-PG($\MixtureParam$).

Having established the equivalence of the stationary points of IPO-MD($\MixtureParam$) and Nash-MD-PG($\MixtureParam$), we now study their gradients more generally in the following result; see Appendix~\ref{sec:gradients} for the proof.

\begin{restatable}{proposition}{propIPOGradients}\label{prop:gradients}
    The gradients of the algorithms Nash-MD-PG($\MixtureParam$) and IPO-MD($\MixtureParam$) are, respectively, 
    \begin{eqnarray*}
        g_{\mbox{{\tiny Nash-MD-PG}}(\MixtureParam)} &=& - {\mathbb E}_{y\sim\pi}\left[g(y)\right]\\
        g_{\mbox{{\tiny IPO-MD}}(\MixtureParam)} &=& - \frac{2}{\tau} {\mathbb E}_{y\sim(\pi)^{1-\MixtureParam}(\piref)^\MixtureParam}\left[g(y)\right]
    \end{eqnarray*}
    where $g(y)$ is given by
    \begin{align*}
        \nabla\log\pi(y)\left( p(y\succ (\pi)^{1-\MixtureParam}(\piref)^\MixtureParam)-\tau\log\frac{\pi(y)}{\piref(y)}\right) \, .
    \end{align*}
\end{restatable}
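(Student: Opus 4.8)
The plan is to compute each gradient directly from its definition and show that both reduce to an expectation of the common term $g(y)$, differing only in the sampling distribution and a scalar prefactor. I would treat the two algorithms in parallel, since the shared quantity $g(y)$ makes the comparison transparent.

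First I would recall the Nash-MD-PG($\MixtureParam$) update from \cref{sec:nash-md-pg}. Its policy-gradient direction is $\nabla\log\pi(y)\bigl(p(y\succ y') - \tfrac12 - \tau\log(\pi(y)/\piref(y))\bigr)$ with $y\sim\pi$ and $y'$ drawn from the geometric mixture $\pi^{1-\MixtureParam}(\piref)^{\MixtureParam}$. Taking the expectation over $y'$ first replaces $p(y\succ y')$ by $p(y\succ \pi^{1-\MixtureParam}(\piref)^{\MixtureParam})$, using the shorthand notation introduced after \cref{eq:online-fixed-point}. The constant $-\tfrac12$ integrates against $\nabla\log\pi(y)$ to zero in expectation over $y\sim\pi$, since $\mathbb{E}_{y\sim\pi}[\nabla\log\pi(y)] = \nabla\sum_y \pi(y) = 0$; this is the standard score-function identity. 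What remains is exactly $\mathbb{E}_{y\sim\pi}[g(y)]$, and after matching the sign convention of the statement (the gradient of a loss being minimised versus an update direction being ascended) we obtain $g_{\text{Nash-MD-PG}(\MixtureParam)} = -\mathbb{E}_{y\sim\pi}[g(y)]$.

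Next I would differentiate the IPO-MD($\MixtureParam$) population loss from \cref{sec:ipo-md}. Writing $h(y,y') = \log\bigl(\pi(y^+)\piref(y^-)/(\pi(y^-)\piref(y^+))\bigr)$ for the inner log-ratio, the loss is $\mathbb{E}\bigl[(h - \tau^{-1}/2)^2\bigr]$ where the outer samples come from the stop-gradient mixture $\texttt{SG}[\pi^{1-\MixtureParam}(\piref)^{\MixtureParam}]$ and $(Y^+,Y^-)\sim\lambda_p(Y,Y')$. Because of the stop-gradient, the sampling distribution contributes no gradient, so I only differentiate the squared term, giving $2(h-\tau^{-1}/2)\,\nabla h$. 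The key computation is then to evaluate the expectation of this product over $\lambda_p$: averaging the $\pm$ labelling according to the preference function converts the contrastive squared-loss gradient into the non-contrastive form, exactly as in the Offline IPO analysis of \citet{azar2023general} that yields the fixed point \cref{eq:ipo-md-fp}. Carrying this through, the $\lambda_p$ expectation turns $(h-\tau^{-1}/2)$ into the preference term $p(y\succ \pi^{1-\MixtureParam}(\piref)^{\MixtureParam}) - \tau\log(\pi(y)/\piref(y))$ scaled by $2/\tau$, and $\nabla h$ collapses to $\nabla\log\pi(y)$, leaving $g_{\text{IPO-MD}(\MixtureParam)} = -\tfrac{2}{\tau}\mathbb{E}_{y\sim\pi^{1-\MixtureParam}(\piref)^{\MixtureParam}}[g(y)]$.

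The main obstacle is the second computation: correctly carrying out the $\lambda_p$ expectation over the contrastive squared loss and verifying that the symmetric labelling collapses the quadratic into the linear preference-minus-KL term with the precise constant $2/\tau$. This is where the factor distinguishing the two gradients originates, and it requires careful bookkeeping of which samples carry the $\pi$-dependence versus the stop-gradient $\piref$-dependence, together with invoking the symmetry condition $p(y'\succ y) = 1 - p(y\succ y')$. I would lean on the Offline IPO derivation, which establishes precisely this reduction for the fixed static distribution, and observe that the argument is agnostic to which distribution generates $(Y,Y')$; substituting the geometric mixture in place of $\mu$ then yields the stated result.
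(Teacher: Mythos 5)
Your Nash-MD-PG computation matches the paper's exactly: integrate out $y'$ to replace $p(y\succ y')$ with $p(y\succ \pi')$, where $\pi'=(\pi)^{1-\MixtureParam}(\piref)^{\MixtureParam}$, and dispose of the constant $-\tfrac12$ by the score identity over $y\sim\pi$. The overall plan for IPO-MD is also the paper's plan (differentiate the quadratic through the stop-gradient, expand the $\lambda_p$ labelling using $p(y'\succ y)=1-p(y\succ y')$, collapse to the non-contrastive form). The difficulty is that the step you explicitly flag as ``the main obstacle'' is the one step you do not carry out, and it is where the only non-trivial content of the proof lives. Writing $\rho(y,y')=\log\tfrac{\pi(y)}{\piref(y)}-\log\tfrac{\pi(y')}{\piref(y')}$, the gradient of the squared loss is $2(\rho-\tfrac{1}{2\tau})\nabla\rho$ with $\nabla\rho=\nabla\log\pi(y)-\nabla\log\pi(y')$; your summary that ``$\nabla h$ collapses to $\nabla\log\pi(y)$'' glosses over the $-\nabla\log\pi(y')$ half. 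When the paper does the bookkeeping, the term $\mathbb{E}[p\,\rho\,\nabla\rho]$ does \emph{not} fully collapse: a residual cross term $-2\,\mathbb{E}_{y\sim\pi'}\!\left[\log\tfrac{\pi(y)}{\piref(y)}\right]\mathbb{E}_{y'\sim\pi'}\!\left[\nabla\log\pi(y')\right]$ survives, and the proof must argue separately that $\mathbb{E}_{y\sim\pi'}[\nabla\log\pi(y)]=0$ (the paper invokes $\nabla\log\pi'(y)=(1-\MixtureParam)\nabla\log\pi(y)$ up to normalisation). The same identity is what reconciles the $-\tfrac12$ appearing in the intermediate expression with the statement's $g(y)$, which carries no $-\tfrac12$ under $\mathbb{E}_{y\sim\pi'}$.

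This directly undercuts your closing claim that the reduction is ``agnostic to which distribution generates $(Y,Y')$.'' For Online IPO ($\MixtureParam=0$) the cross term vanishes by the ordinary score identity because the sampling distribution is $\pi$ itself; for $\MixtureParam>0$ the sampling distribution is $\pi'\neq\pi$ and the cancellation is a separate claim about the geometric mixture, not a consequence of the Offline IPO derivation with $\mu$ replaced by $\pi'$ (indeed, even the paper's own one-line justification is terse, as it suppresses the gradient of the mixture's normalising constant). So: right strategy, same decomposition as the paper, but the decisive cancellation is asserted rather than proved, and without it you cannot obtain the stated form $-\tfrac{2}{\tau}\mathbb{E}_{y\sim\pi'}[g(y)]$ with the constant-free $g(y)$.
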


We recover the result mentioned earlier that when $\MixtureParam=0$, IPO-MD($\MixtureParam=0$) (i.e., Online IPO) has a gradient aligned with that of Nash-MD-PG($\MixtureParam=0$) (i.e., Self-Play). Now as soon as $\MixtureParam>0$, the gradients of these two algorithms are different. Interestingly however, as noted above, their fixed points remain the same.

We can also relate the fixed points themselves back to the original regularised game given in Equation~\eqref{eq:payoff}, as described below.

\begin{restatable}{proposition}{propIPOMD}
    Let $\pi^*_\MixtureParam$ be the fixed-point of IPO-MD($\MixtureParam$), satisfying Equation~\eqref{eq:ipo-md-fp}. The policy $\pi'_\MixtureParam=(\pi^*_\MixtureParam)^{1-\MixtureParam}(\piref)^\MixtureParam$ is the Nash equilibrium for the version of the game in Equation~\eqref{eq:payoff} with regularisation parameter $\tau$ modified to $\tau(1-\MixtureParam)^{-1}$.
\end{restatable}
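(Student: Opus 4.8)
The plan is to prove the statement by direct verification: I will show that the geometric mixture $\pi'_\MixtureParam = (\pi^*_\MixtureParam)^{1-\MixtureParam}(\piref)^\MixtureParam$ satisfies the self-consistent fixed-point equation that characterises the Nash equilibrium of the regularised game in Equation~\eqref{eq:payoff}, but with the temperature $\tau$ replaced by $\tilde\tau = \tau(1-\MixtureParam)^{-1}$.

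First I would recall the characterisation of the Nash equilibrium. By the best-response analysis of the objective in Equation~\eqref{eq:rpo} (the same analysis underlying Equation~\eqref{eq:online-fixed-point}), a best response to a fixed opponent $\mu$ in the game of Equation~\eqref{eq:payoff} at temperature $\tilde\tau$ has the Gibbs form $\propto \piref(y)\exp(\tilde\tau^{-1} p(y \succ \mu))$, since the $+\tilde\tau\KL(\pi_{-i}\,\|\,\piref)$ term does not depend on $\pi_i$. Because the KL regularisation makes each player's objective strictly concave, the regularised game has a unique Nash equilibrium, which is the unique policy $\nu$ that is a best response against itself: $\nu(y) \propto \piref(y)\exp(\tilde\tau^{-1} p(y \succ \nu))$. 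It therefore suffices to show that $\pi'_\MixtureParam$ solves this equation at $\tilde\tau = \tau(1-\MixtureParam)^{-1}$.

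Next I would start from the IPO-MD fixed-point condition in Equation~\eqref{eq:ipo-md-fp}, namely $\pi^*_\MixtureParam(y) \propto \piref(y)\exp(\tau^{-1} p(y \succ \pi'_\MixtureParam))$, where the opponent appearing in the preference term is already the mixture $\pi'_\MixtureParam$. Raising both sides to the power $(1-\MixtureParam)$ and multiplying by $(\piref(y))^\MixtureParam$, the two powers of $\piref$ recombine into a single factor of $\piref(y)$, the unknown normalising constant is absorbed into a new proportionality constant, and the coefficient in the exponent becomes $(1-\MixtureParam)\tau^{-1}$. After renormalising, this yields exactly $\pi'_\MixtureParam(y) \propto \piref(y)\exp\big((1-\MixtureParam)\tau^{-1}\, p(y \succ \pi'_\MixtureParam)\big)$. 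Since $(1-\MixtureParam)\tau^{-1} = \tilde\tau^{-1}$ with $\tilde\tau = \tau(1-\MixtureParam)^{-1}$, this is precisely the self-consistency equation from the previous step, so $\pi'_\MixtureParam$ is the Nash equilibrium of the $\tilde\tau$-regularised game.

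The step requiring the most care is the second one: the transformation works only because the preference argument in Equation~\eqref{eq:ipo-md-fp} is the mixture $\pi'_\MixtureParam$ itself (not $\pi^*_\MixtureParam$), so applying the geometric-mixture map to $\pi^*_\MixtureParam$ leaves this argument fixed and produces a genuinely self-consistent equation in $\pi'_\MixtureParam$. I will track the normalisation constants honestly — the power $(1-\MixtureParam)$ turns the original constant into its $(1-\MixtureParam)$-th power, which is harmless since everything is stated up to proportionality — and I will invoke uniqueness of the regularised Nash equilibrium (from strict concavity of the KL-penalised payoff) in order to conclude that $\pi'_\MixtureParam$ is \emph{the} equilibrium, rather than merely \emph{a} fixed point of the best-response map.
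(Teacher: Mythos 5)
Your proposal is correct and follows essentially the same route as the paper's proof: substitute the fixed-point condition of Equation~\eqref{eq:ipo-md-fp} into the geometric mixture, recombine the powers of $\piref$, and read off the Nash self-consistency equation at temperature $\tau(1-\MixtureParam)^{-1}$. The extra care you take in noting that the preference argument is already $\pi'_\MixtureParam$ (so the equation becomes genuinely self-consistent) and in invoking uniqueness of the regularised equilibrium is a welcome addition the paper leaves implicit.
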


\begin{proof}
    Using the property in Equation~\eqref{eq:ipo-md-fp}, we have
    \begin{align*}
       &  \pi'_\MixtureParam(y)\\
        \propto& (\piref(y))^{1-\MixtureParam} \exp(\tau^{-1}(1-\MixtureParam) p(y\succ \pi'_\MixtureParam)) (\piref(y))^{\MixtureParam}\\
        \propto& \piref(y)  \exp(\tau^{-1}(1-\MixtureParam) p(y\succ \pi'_\MixtureParam)) \, ,
    \end{align*}
    which is the Nash equilibrium condition for the game described, as required.
\end{proof}

\section{Experiments}
\label{sec:experiments}
We present our results on fine-tuning large language models where we compare our algorithms, online-IPO and IPO-MD, against recent baselines. In this section we only present the results for the online versions of IPO, DPO and SLIC to make the comparison against IPO-MD and Nash-MD-PG fair, and drop the corresponding "online-" prefix for simplicity. We refer the reader to the appendix for results concerning the offline versions of those algorithms. Note that to aid interpretability and reproducibility, we now consider the contextual bandit case where the actions $y$, also referred as generations, are conditioned  on a prompt $x$.

\textbf{Setup and algorithms.}
We perform RLHF-style experiments where we initialise from a supervised-fine-tuned checkpoint, and then further fine-tune using one of the following algorithms: RL (regularised policy gradient), IPO, DPO , SLiC, Nash-MD and IPO-MD. These algorithms use either a learned reward model $r_\phi$ (RLHF) or a learned preference model $p_\phi$ (IPO, DPO, SLiC, Nash-MD, and IPO-MD). For our RL baseline, similarly to~\citet{munos2023nash}, we use a regularised policy gradient update:
\begin{align*}
\label{eq:reg-pg}
\mathop{\mathbb{E}}_{\substack{x\sim\rho \\ y\sim\pi_\theta(\cdot|x)}}\!\!\!\!\!\!\big\lbrack \nabla_\theta \log\pi_\theta(y|x) \!\left( r_\phi(y|x) \!-\! \tau \KL(\pi_\theta(\cdot|x), \piref\!(\cdot|x))\right) \!\big\rbrack,
\end{align*}
where $r_\phi(y|x)$ is the reward model's value for generation $y$ and context $x$.

\textbf{Implementation details.}
The contrastive offline algorithms such as IPO, DPO and SliC  directly optimise the policy $\pi_\theta$ by minimising their respective losses over a pairwise dataset $(x_i, y_i^+, y_i^-)_{i=1}^N$ . In practice, we sample batches $(x_i, y_i^+, y_i^-)_{i=1}^B$ of size $B\ll N$ and we minimise the following loss:
\begin{align*}
\frac{1}{B}\sum_{i=1}^B\mathcal{L}_{\texttt{ALGO}}(\theta, x_i, y_i,y'_i),
\end{align*}
where:
\begin{align*}
\mathcal{L}_{\texttt{IPO}}(\theta, x, y, y') \! &= \! \left(\log \left( \frac{\pi_\theta(y|x)\piref(y'|x)}{\pi_\theta(y'|x) \piref(y|x)} \right) - \tau^{-1}/2 \right)^2, 
\\
\mathcal{L}_{\texttt{DPO}}(\theta, x, y, y')\! &= \!\sigma\left( \tau\log\left(\frac{\pi_\theta(y|x) \piref(y'|x)}{\pi_\theta(y'|x) \piref(y|x) }\right)\right),
\\
\mathcal{L}_{\texttt{SLiC}}(\theta, x, y, y')\! &= \! \max\left(0, 1 \!-\! \tau\log \left( \frac{\pi_\theta(y|x)\piref(y'|x)}{\pi_\theta(y'|x)\piref(y|x)} \right) \right) \! .
\end{align*}
One important detail concerning IPO is that we use a simplified loss in our code. One can remark by expanding the square and removing terms that do not depend on $\theta$ that the IPO loss is equivalent to:
\begin{align*}
 -\log \left(\frac{\pi_\theta(y|x)}{\pi_\theta(y'|x)} \right) + \tau \left(\log \left( \frac{\pi_\theta(y|x)\piref(y'|x)}{\pi_\theta(y'|x) \piref(y|x)} \right)\!\right)^2.  
\end{align*}

The contrastive online algorithms such as IPO, DPO and SLiC  use a trained preference model $p_\phi$. To train $p_\phi$, we use a pairwise dataset $(x_i, y_i^+, y_i^-)_{i=1}^N$ and follow the same protocol as~\citet{munos2023nash}. Then, to train the policy $\pi_\theta$, for each context $x_i$ of a batch, we sample two completely new generations $(y_i, y_i')\sim\pi_\theta$ according to $\pi_\theta$ and compute the preference $p_i=p_\phi(y_i\succ y_i'|x_i)$ via the preference model. Then, for each algorithm, we minimise the respective following loss:
\begin{align*}
\frac{1}{B}\sum_{i=1}^B\left( p_i\mathcal{L}_{\texttt{ALGO}}(\theta, x_i, y_i,y'_i) + (1-p_i)\mathcal{L}_{\texttt{ALGO}}(\theta, x_i, y'_i,y_i)\right).
\end{align*}

Finally, regarding IPO-MD, the only difference with (online)-IPO is how the generations are sampled. In theory, we should sample from the mixture $\pi_\theta^{1-\MixtureParam} (\piref)^{\MixtureParam}$ which is not feasible (see~\citep{munos2023nash}). In practice, we sample from the one-step-at-a-time mixture $\hat{\pi}_{\MixtureParam}$, which consists at step $n$ to compute the mixture of logits between the online and reference logits:
\begin{align*}
&\log(\hat{\pi}_{\MixtureParam}(.|y_{0:n-1}, x)) = (1-\MixtureParam) \log(\pi_{\theta}(.|y_{0:n-1}, x)) 
\\
&+ \MixtureParam \log(\piref(.|y_{0:n-1}, x)) + C(y_{0:n-1}, x), 
\end{align*}
where $C(y_{0:n-1}, x)$ is a path-dependent constant and sample according to the softmax of this mixture of logits. This sampling process is not equivalent to sampling from  $\pi_\theta^{1-\MixtureParam} (\piref)^{\MixtureParam}$ as shown in~\citep{munos2023nash}.

\begin{table*}[t]
\centering
{
\setlength\tabcolsep{4.7pt}
\begin{tabular}{c|l|l|l|l|l|l|l|l|l|l|l|l|l|l}
$p(y \succ y')$  & IPO     & IPO-MD        & DPO           & Nash-MD-PG    & SLiC          & RL          \\
\midrule
\textbf{IPO}        & 0.500   & \textbf{0.515} \small{(0.024)} & \textbf{0.608} \small{(0.038)} & \textbf{0.621} \small{(0.030)} & \textbf{0.608} \small{(0.025)} & \textbf{0.791} \small{(0.012)} \\
\textbf{IPO-MD}     & \textbf{0.485} \small{(0.024)} & 0.500    & \textbf{0.600} \small{(0.028)} & \textbf{0.608} \small{(0.026)} & \textbf{0.594} \small{(0.020)} & \textbf{0.778} \small{(0.004)} \\
DPO        & 0.392 \small{(0.038)} & 0.400 \small{(0.028)} & 0.500    & 0.520 \small{(0.041)} & 0.493 \small{(0.040)} & 0.727 \small{(0.020)} \\
Nash-MD-PG & 0.379 \small{(0.030)} & 0.392 \small{(0.026)} & 0.480 \small{(0.041)} & 0.500    & 0.479 \small{(0.029)} & 0.729 \small{(0.020)} \\
SLiC       & 0.392 \small{(0.025)} & 0.406 \small{(0.020)} & 0.507 \small{(0.040)} & 0.521 \small{(0.029)} & 0.500    & 0.728 \small{(0.010)} \\
RL         & 0.209 \small{(0.012)} & 0.222 \small{(0.004)} & 0.273 \small{(0.020)} & 0.271 \small{(0.020)} & 0.272 \small{(0.010)} & 0.500     
\end{tabular}
}
\caption{Side-by-side evaluation for summarisation. Each entry is the average preference of responses generated by the row method ($y$) over the responses generated by the column method ($y'$). We also evaluated each method using 3 different seeds, computed a $3\times3$ comparisons matrix across seeds and report the standard deviation of this matrix's entries.
\vspace{-1\baselineskip}
}
\label{results:pref_sum}
\end{table*}

\textbf{Evaluation tasks and Models.}
In our experiments, we test all of the algorithms on an article summarisation task. We use the dataset described by~\citet{stiennon2020learning} that has been built from the TL;DR dataset~\citep{TLDR}. This is a dataset with pairwise preferences between alternate summaries. We train our preference and reward model  on the train set $D_{\texttt{Train}}$, which contains $92820$ examples. We evaluate reward and preference models on a test set of high confidence data $D_{\texttt{Test}}$ and use the checkpoints with the highest evaluation agreement score. To train the policies with online algorithms, we use prompts of the train set of the XSum dataset~\citep{shashi2018dont}.

We use T5X large language models~\citep{roberts2022t5x} to train our policies, rewards and preference models.
The T5X models we use are auto-regressive transformers with an encoder-decoder architecture. All the details of the models architecture and the different sizes are provided in the documentation~\citep{roberts2022t5x}.
For the policy model, we use a \emph{large} (L) encoder-decoder model ($770M$ parameters).
For the preference and reward models, we use an \emph{XL} encoder-decoder model ($3B$ parameters). To train reward and preference models we use the same losses and protocol as~\citet{munos2023nash}. For summarisation, we initialise our policy with a T5X-L model and fine-tune it with supervised learning using the OpenAI dataset described by~\citet{stiennon2020learning}. We call this supervised fine-tuned model the SFT. All our policies for summarisation are initialised with this SFT checkpoint. 

Our evaluation pipeline is based upon the use of PaLM2~\citep{anil2023palm} as a judge for side-by-side comparisons. We sample responses for each of the policies trained by each algorithm from a test set of prompts, and ask PaLM2 to pick which one is better. We use validation and test prompts from the XSum dataset~\citep{shashi2018dont} for evaluation for the summarisation task, which is the same procedure used by~\citet{munos2023nash}. The evaluation prompt we use for the side-by-side comparison is:

\texttt{\small You are an expert summary rater. Given a piece of text and two of its possible summaries, output 1 or 2 to indicate which summary is better.}

\texttt{\small Text - <text>, Summary 1 - <summary1>, Summary 2 - <summary2>}

\texttt{\small Preferred Summary -}

We use cloud Tensor Processing Units \citep[TPUs; ][]{Jouppi2023TPUVA} in their version $5e$ for our hardware compute, either in configurations of $2 \times 4$ devices for training offline experiments, or $4 \times 4$ devices for online experiments. This setup typically yields speed of around $0.25$ training steps per second ($24$ hours per $20,000$ steps).  We run our experiments with default parameters $10^{-4}$ for the learning rate, and a default total of $30,000$ training steps, using a batch size of $32$. The $\tau$ factor is held constant throughout training, and we do not employ any warmup steps. We use the AdaFactor \citep{Shazeer2018AdafactorAL} optimizer with decay set to $0.8$.

\vspace{-.5\baselineskip}
\subsection{Main results}
\vspace{-.25\baselineskip}
\label{subsec:main-results}
In this section, we present side-by-side evaluation scores between the following online algorithms: RL, IPO, DPO, SLiC, IPO-MD and Nash-MD-PG. Table~\ref{results:pref_sum} presents the side by side scores for the summarisation task. The checkpoints we evaluate are the best checkpoints except for RL that we use as a baseline of comparison to find the best checkpoints. The RL checkpoint is fixed and was chosen following the protocol of~\citep{munos2023nash} after sweeping over $6$ values of $\tau$ ($\{0.01, 0.02, 0.05, 0.1, 0.15, 0.2\}$) and comparing the performance against the SFT checkpoint after $10k$ learner steps. To find the best checkpoints for the other algorithms, we evaluate every checkpoint of each algorithm against the RL checkpoint (over $2000$ prompts sampled from a validation split) at different learning steps values (we checkpoint every $2000$ learner steps for a total of $30k$ learner steps), regularisation parameter $\tau$ (we sweep over $5$ values $\{0.1, 0.5, 1.0, 5.0, 10.0\}$) and also $\MixtureParam$ for IPO-MD and Nash-MD-PG (we sweep over $2$ values $0.125$ and $0.25$) and we take the best checkpoint. After finding the best checkpoint for every algorithm (see App.~\ref{app:hypers}), we re-run each method for 3 different seeds using the best hyperparamters. We then perform 9 side-by-side evaluation (i.e., $3\times3$ 1vs1 evaluations between each of the 3 seeds for each pair of methods) using $2000$ prompts from a different validation split for each comparison. We report mean and standard deviation across these 9 comparisons.

On the summarisation task, looking only at the mean the best algorithm is IPO as it beats all the other algorithms on a side-by-side comparison. However, once we take into consideration the standard deviation IPO and IPO-MD's performance becomes statistically indistinguishable, with both algorithms consistently beating all the other algorithms. This shows that those algorithms are indeed robust and are closer to a Nash optimum than the other algorithms. Those results are limited to a summarisation task and more experiments should be conducted to validate these results on a general conversational agent. However, we do think that summarisation is a good test bed to showcase the quality of human alignment algorithms because it is a complex and high-in-demand task.

\vspace{-.5\baselineskip}
\subsection{Ablations and Additional Results on Summarisation}
\vspace{-.25\baselineskip}
\label{subsec:ablations}
\paragraph{Sweep on the regularisation parameter $\tau$:}

\cref{fig:beta-sweep} sweeps the regularisation parameter for IPO and DPO. It's interesting to see for small values of regularisation IPO and DPO behave very similarly, but for larger values the score for IPO decays much faster. This matches the findings in \citet{azar2023general} that show that IPO has a much stronger regularisation effect than DPO as $\tau$ gets larger.

\begin{figure}[h]
    \centering
    \vspace{-.5\baselineskip}
    \includegraphics[width=.45\textwidth]{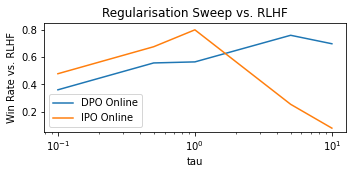}
    \vspace{-1\baselineskip}
    \caption{Sweep on $\tau$ vs. RL on the \emph{summarisation} task.
    \vspace{-.5\baselineskip}
    }
    \label{fig:beta-sweep}
\end{figure}

\vspace{-\baselineskip}
\paragraph{Learning steps curve}
\cref{fig:online-checkpoint-sweep} shows the performance against RL for IPO Online as it trains. We can see that as the regularisation gets stronger, more training time is required to reach the best performance.

\begin{figure}[h!]
    \vspace{-.5\baselineskip}
    \centering
    \includegraphics[width=.45\textwidth]{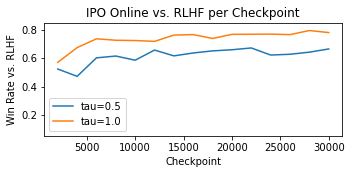}
    \vspace{-\baselineskip}
    \caption{Training Progress of Online IPO (as a function of number of steps) on the \emph{summarisation} task.
    \vspace{-.5\baselineskip}
    }
    \label{fig:online-checkpoint-sweep}
\end{figure}

\section{Conclusion}

In this paper, we have identified several factors of variation, such as contrastivity, online/offline and regularised-sampling, between two recently proposed algorithms for preference optimisation, IPO and Nash-MD-PG. In doing so, we have introduced two new algorithms, online-IPO and IPO-MD, that combine different strengths of these existing algorithms, namely the loss function of IPO with the online sampling and regularised data distribution of Nash-MD. Theoretical analysis reveals a surprising equivalence at the level of expected update between online-IPO and self-play in a regularised two-player preference optimisation game. This important property is not possessed by online-DPO. Finally, our empirical investigation on a summarisation task also reveals that IPO-MD and online-IPO are promising approaches to preference optimisation at scale as they are the most robust algorithms. At the moment, our work is restricted to model of size $770M$ on a single task, future works will consist to scale our approach to a full conversational agent using a larger model ($100+$ billions parameter). 

\textbf{Acknowledgements.} We are grateful for the collaborative environment at Google DeepMind. We would like to thank Shantanu Thakoor, Will Dabney, Doina Precup, Mohammad Gheshlaghi Azar, Olivier Bachem, Sertan Girgin, Matt Hoffman, Nikola Momchev, Bobak Shahriari, and Piotr Stanczyk.

\bibliographystyle{icml2024}

\begin{thebibliography}{54}
\providecommand{\natexlab}[1]{#1}
\providecommand{\url}[1]{\texttt{#1}}
\expandafter\ifx\csname urlstyle\endcsname\relax
  \providecommand{\doi}[1]{doi: #1}\else
  \providecommand{\doi}{doi: \begingroup \urlstyle{rm}\Url}\fi

\bibitem[Amodei et~al.(2016)Amodei, Olah, Steinhardt, Christiano, Schulman, and
  Man{\'e}]{Amodei2016ConcretePI}
Amodei, D., Olah, C., Steinhardt, J., Christiano, P., Schulman, J., and
  Man{\'e}, D.
\newblock Concrete problems in {AI} safety.
\newblock \emph{arXiv}, 2016.

\bibitem[Anil et~al.(2023)Anil, Dai, Firat, Johnson, Lepikhin, Passos, Shakeri,
  Taropa, Bailey, Chen, Chu, Clark, Shafey, Huang, Meier-Hellstern, Mishra,
  Moreira, Omernick, Robinson, Ruder, Tay, Xiao, Xu, Zhang, Abrego, Ahn,
  Austin, Barham, Botha, Bradbury, Brahma, Brooks, Catasta, Cheng, Cherry,
  Choquette-Choo, Chowdhery, Crepy, Dave, Dehghani, Dev, Devlin, Díaz, Du,
  Dyer, Feinberg, Feng, Fienber, Freitag, Garcia, Gehrmann, Gonzalez, Gur-Ari,
  Hand, Hashemi, Hou, Howland, Hu, Hui, Hurwitz, Isard, Ittycheriah, Jagielski,
  Jia, Kenealy, Krikun, Kudugunta, Lan, Lee, Lee, Li, Li, Li, Li, Li, Lim, Lin,
  Liu, Liu, Maggioni, Mahendru, Maynez, Misra, Moussalem, Nado, Nham, Ni,
  Nystrom, Parrish, Pellat, Polacek, Polozov, Pope, Qiao, Reif, Richter, Riley,
  Ros, Roy, Saeta, Samuel, Shelby, Slone, Smilkov, So, Sohn, Tokumine, Valter,
  Vasudevan, Vodrahalli, Wang, Wang, Wang, Wang, Wieting, Wu, Xu, Xu, Xue, Yin,
  Yu, Zhang, Zheng, Zheng, Zhou, Zhou, Petrov, and Wu]{anil2023palm}
Anil, R., Dai, A.~M., Firat, O., Johnson, M., Lepikhin, D., Passos, A.,
  Shakeri, S., Taropa, E., Bailey, P., Chen, Z., Chu, E., Clark, J.~H., Shafey,
  L.~E., Huang, Y., Meier-Hellstern, K., Mishra, G., Moreira, E., Omernick, M.,
  Robinson, K., Ruder, S., Tay, Y., Xiao, K., Xu, Y., Zhang, Y., Abrego, G.~H.,
  Ahn, J., Austin, J., Barham, P., Botha, J., Bradbury, J., Brahma, S., Brooks,
  K., Catasta, M., Cheng, Y., Cherry, C., Choquette-Choo, C.~A., Chowdhery, A.,
  Crepy, C., Dave, S., Dehghani, M., Dev, S., Devlin, J., Díaz, M., Du, N.,
  Dyer, E., Feinberg, V., Feng, F., Fienber, V., Freitag, M., Garcia, X.,
  Gehrmann, S., Gonzalez, L., Gur-Ari, G., Hand, S., Hashemi, H., Hou, L.,
  Howland, J., Hu, A., Hui, J., Hurwitz, J., Isard, M., Ittycheriah, A.,
  Jagielski, M., Jia, W., Kenealy, K., Krikun, M., Kudugunta, S., Lan, C., Lee,
  K., Lee, B., Li, E., Li, M., Li, W., Li, Y., Li, J., Lim, H., Lin, H., Liu,
  Z., Liu, F., Maggioni, M., Mahendru, A., Maynez, J., Misra, V., Moussalem,
  M., Nado, Z., Nham, J., Ni, E., Nystrom, A., Parrish, A., Pellat, M.,
  Polacek, M., Polozov, A., Pope, R., Qiao, S., Reif, E., Richter, B., Riley,
  P., Ros, A.~C., Roy, A., Saeta, B., Samuel, R., Shelby, R., Slone, A.,
  Smilkov, D., So, D.~R., Sohn, D., Tokumine, S., Valter, D., Vasudevan, V.,
  Vodrahalli, K., Wang, X., Wang, P., Wang, Z., Wang, T., Wieting, J., Wu, Y.,
  Xu, K., Xu, Y., Xue, L., Yin, P., Yu, J., Zhang, Q., Zheng, S., Zheng, C.,
  Zhou, W., Zhou, D., Petrov, S., and Wu, Y.
\newblock {PaLM} 2 technical report, 2023.

\bibitem[Azar et~al.(2023)Azar, Rowland, Piot, Guo, Calandriello, Valko, and
  Munos]{azar2023general}
Azar, M.~G., Rowland, M., Piot, B., Guo, D., Calandriello, D., Valko, M., and
  Munos, R.
\newblock A general theoretical paradigm to understand learning from human
  preferences.
\newblock \emph{arXiv}, 2023.

\bibitem[Bai et~al.(2022{\natexlab{a}})Bai, Jones, Ndousse, Askell, Chen,
  DasSarma, Drain, Fort, Ganguli, Henighan, Joseph, Kadavath, Kernion, Conerly,
  El-Showk, Elhage, Hatfield-Dodds, Hernandez, Hume, Johnston, Kravec, Lovitt,
  Nanda, Olsson, Amodei, Brown, Clark, McCandlish, Olah, Mann, and
  Kaplan]{bai2022training}
Bai, Y., Jones, A., Ndousse, K., Askell, A., Chen, A., DasSarma, N., Drain, D.,
  Fort, S., Ganguli, D., Henighan, T., Joseph, N., Kadavath, S., Kernion, J.,
  Conerly, T., El-Showk, S., Elhage, N., Hatfield-Dodds, Z., Hernandez, D.,
  Hume, T., Johnston, S., Kravec, S., Lovitt, L., Nanda, N., Olsson, C.,
  Amodei, D., Brown, T., Clark, J., McCandlish, S., Olah, C., Mann, B., and
  Kaplan, J.
\newblock Training a helpful and harmless assistant with reinforcement learning
  from human feedback.
\newblock \emph{arXiv}, 2022{\natexlab{a}}.

\bibitem[Bai et~al.(2022{\natexlab{b}})Bai, Kadavath, Kundu, Askell, Kernion,
  Jones, Chen, Goldie, Mirhoseini, McKinnon, Chen, Olsson, Olah, Hernandez,
  Drain, Ganguli, Li, Tran-Johnson, Perez, Kerr, Mueller, Ladish, Landau,
  Ndousse, Lukoiūtė, Lovitt, Sellitto, Elhage, Schiefer, Mercado, DasSarma,
  Lasenby, Larson, Ringer, Johnston, Kravec, Showk, Fort, Lanham,
  Telleen-Lawton, Conerly, Henighan, Hume, Bowman, Hatfield-Dodds, Mann,
  Amodei, Joseph, McCandlish, Brown, and Kaplan]{Bai2022ConstitutionalAH}
Bai, Y., Kadavath, S., Kundu, S., Askell, A., Kernion, J., Jones, A., Chen, A.,
  Goldie, A., Mirhoseini, A., McKinnon, C., Chen, C., Olsson, C., Olah, C.,
  Hernandez, D., Drain, D., Ganguli, D., Li, D., Tran-Johnson, E., Perez, E.,
  Kerr, J., Mueller, J., Ladish, J., Landau, J., Ndousse, K., Lukoiūtė, K.,
  Lovitt, L., Sellitto, M., Elhage, N., Schiefer, N., Mercado, N., DasSarma,
  N., Lasenby, R., Larson, R., Ringer, S., Johnston, S., Kravec, S., Showk,
  S.~E., Fort, S., Lanham, T., Telleen-Lawton, T., Conerly, T., Henighan,
  T.~J., Hume, T., Bowman, S., Hatfield-Dodds, Z., Mann, B., Amodei, D.,
  Joseph, N., McCandlish, S., Brown, T.~B., and Kaplan, J.
\newblock Constitutional {AI}: Harmlessness from {AI} feedback.
\newblock \emph{arXiv}, 2022{\natexlab{b}}.

\bibitem[Boyd \& Vandenberghe(2004)Boyd and Vandenberghe]{boyd2004convex}
Boyd, S.~P. and Vandenberghe, L.
\newblock \emph{Convex Optimization}.
\newblock Cambridge University Press, 2004.

\bibitem[Bradley \& Terry(1952)Bradley and Terry]{bradley1952rank}
Bradley, R.~A. and Terry, M.~E.
\newblock Rank analysis of incomplete block designs: I. the method of paired
  comparisons.
\newblock \emph{Biometrika}, 39\penalty0 (3/4):\penalty0 324--345, 1952.

\bibitem[Casper et~al.(2023)Casper, Davies, Shi, Gilbert, Scheurer, Rando,
  Freedman, Korbak, Lindner, Freire, et~al.]{casper2023open}
Casper, S., Davies, X., Shi, C., Gilbert, T.~K., Scheurer, J., Rando, J.,
  Freedman, R., Korbak, T., Lindner, D., Freire, P., et~al.
\newblock Open problems and fundamental limitations of reinforcement learning
  from human feedback.
\newblock \emph{arXiv}, 2023.

\bibitem[Christiano et~al.(2017)Christiano, Leike, Brown, Martic, Legg, and
  Amodei]{christiano2017deep}
Christiano, P.~F., Leike, J., Brown, T., Martic, M., Legg, S., and Amodei, D.
\newblock Deep reinforcement learning from human preferences.
\newblock In \emph{Advances in Neural Information Processing Systems}, 2017.

\bibitem[Coste et~al.(2023)Coste, Anwar, Kirk, and Krueger]{Coste2023RewardME}
Coste, T., Anwar, U., Kirk, R., and Krueger, D.~S.
\newblock Reward model ensembles help mitigate overoptimization.
\newblock \emph{arXiv}, 2023.

\bibitem[Dong et~al.(2023)Dong, Xiong, Goyal, Pan, Diao, Zhang, Shum, and
  Zhang]{Dong2023RAFTRR}
Dong, H., Xiong, W., Goyal, D., Pan, R., Diao, S., Zhang, J., Shum, K., and
  Zhang, T.
\newblock {RAFT}: Reward r{A}nked {FineTuning} for generative foundation model
  alignment.
\newblock \emph{arXiv}, 2023.

\bibitem[Eisenstein et~al.(2023)Eisenstein, Nagpal, Agarwal, Beirami, D'Amour,
  Dvijotham, Fisch, Heller, Pfohl, Ramachandran, Shaw, and
  Berant]{Eisenstein2023HelpingOH}
Eisenstein, J., Nagpal, C., Agarwal, A., Beirami, A., D'Amour, A., Dvijotham,
  D., Fisch, A., Heller, K., Pfohl, S.~R., Ramachandran, D., Shaw, P., and
  Berant, J.
\newblock Helping or herding? {R}ward model ensembles mitigate but do not
  eliminate reward hacking.
\newblock \emph{arXiv}, 2023.

\bibitem[Fujimoto et~al.(2019)Fujimoto, Meger, and Precup]{fujimoto2019off}
Fujimoto, S., Meger, D., and Precup, D.
\newblock Off-policy deep reinforcement learning without exploration.
\newblock In \emph{Proceedings of the International Conference on Machine
  Learning}, 2019.

\bibitem[Gao et~al.(2022)Gao, Schulman, and Hilton]{Gao2022ScalingLF}
Gao, L., Schulman, J., and Hilton, J.
\newblock Scaling laws for reward model overoptimization.
\newblock In \emph{Proceedings of the International Conference on Machine
  Learning}, 2022.

\bibitem[Glaese et~al.(2022)Glaese, McAleese, Trebacz, Aslanides, Firoiu,
  Ewalds, Rauh, Weidinger, Chadwick, Thacker, Campbell-Gillingham, Uesato,
  Huang, Comanescu, Yang, See, Dathathri, Greig, Chen, Fritz, Elias, Green,
  Mokrá, Fernando, Wu, Foley, Young, Gabriel, Isaac, Mellor, Hassabis,
  Kavukcuoglu, Hendricks, and Irving]{glaese2022improving}
Glaese, A., McAleese, N., Trebacz, M., Aslanides, J., Firoiu, V., Ewalds, T.,
  Rauh, M., Weidinger, L., Chadwick, M., Thacker, P., Campbell-Gillingham, L.,
  Uesato, J., Huang, P.-S., Comanescu, R., Yang, F., See, A., Dathathri, S.,
  Greig, R., Chen, C., Fritz, D., Elias, J.~S., Green, R., Mokrá, S.,
  Fernando, N., Wu, B., Foley, R., Young, S., Gabriel, I., Isaac, W., Mellor,
  J., Hassabis, D., Kavukcuoglu, K., Hendricks, L.~A., and Irving, G.
\newblock Improving alignment of dialogue agents via targeted human judgements.
\newblock \emph{arXiv}, 2022.

\bibitem[Griffith et~al.(2013)Griffith, Subramanian, Scholz, Isbell, and
  Thomaz]{griffith2013policy}
Griffith, S., Subramanian, K., Scholz, J., Isbell, C.~L., and Thomaz, A.~L.
\newblock Policy shaping: Integrating human feedback with reinforcement
  learning.
\newblock In \emph{Advances in Neural Information Processing Systems}, 2013.

\bibitem[Hejna et~al.(2023)Hejna, Rafailov, Sikchi, Finn, Niekum, Knox, and
  Sadigh]{hejna2023contrastive}
Hejna, J., Rafailov, R., Sikchi, H., Finn, C., Niekum, S., Knox, W.~B., and
  Sadigh, D.
\newblock Contrastive prefence learning: Learning from human feedback without
  {RL}.
\newblock \emph{arXiv}, 2023.

\bibitem[Ivison et~al.(2023)Ivison, Wang, Pyatkin, Lambert, Peters, Dasigi,
  Jang, Wadden, Smith, Beltagy, and Hajishirzi]{Ivison2023CamelsIA}
Ivison, H., Wang, Y., Pyatkin, V., Lambert, N., Peters, M., Dasigi, P., Jang,
  J., Wadden, D., Smith, N.~A., Beltagy, I., and Hajishirzi, H.
\newblock Camels in a changing climate: {E}nhancing {LM} adaptation with {Tulu}
  2.
\newblock \emph{arXiv}, 2023.

\bibitem[Jaques et~al.(2019)Jaques, Ghandeharioun, Shen, Ferguson, Lapedriza,
  Jones, Gu, and Picard]{jaques2019way}
Jaques, N., Ghandeharioun, A., Shen, J.~H., Ferguson, C., Lapedriza, A., Jones,
  N., Gu, S., and Picard, R.
\newblock Way off-policy batch deep reinforcement learning of implicit human
  preferences in dialog.
\newblock \emph{arXiv}, 2019.

\bibitem[Jouppi et~al.(2023)Jouppi, Kurian, Li, Ma, Nagarajan, Nai, Patil,
  Subramanian, Swing, Towles, Young, Zhou, Zhou, and
  Patterson]{Jouppi2023TPUVA}
Jouppi, N.~P., Kurian, G., Li, S., Ma, P.~C., Nagarajan, R., Nai, L., Patil,
  N., Subramanian, S., Swing, A., Towles, B., Young, C., Zhou, X., Zhou, Z.,
  and Patterson, D.~A.
\newblock {TPU v4}: An optically reconfigurable supercomputer for machine
  learning with hardware support for embeddings.
\newblock In \emph{Proceedings of the Annual International Symposium on
  Computer Architecture}, 2023.

\bibitem[Kirk et~al.(2023)Kirk, Mediratta, Nalmpantis, Luketina, Hambro,
  Grefenstette, and Raileanu]{kirk2023understanding}
Kirk, R., Mediratta, I., Nalmpantis, C., Luketina, J., Hambro, E.,
  Grefenstette, E., and Raileanu, R.
\newblock Understanding the effects of {RLHF} on {LLM} generalisation and
  diversity.
\newblock \emph{arXiv}, 2023.

\bibitem[Knox \& Stone(2008)Knox and Stone]{knox2008tamer}
Knox, W.~B. and Stone, P.
\newblock {TAMER}: Training an agent manually via evaluative reinforcement.
\newblock In \emph{Proceedings of the IEEE International Conference on
  Development and Learning}, 2008.

\bibitem[Lee et~al.(2023)Lee, Phatale, Mansoor, Lu, Mesnard, Bishop, Carbune,
  and Rastogi]{lee2023rlaif}
Lee, H., Phatale, S., Mansoor, H., Lu, K., Mesnard, T., Bishop, C., Carbune,
  V., and Rastogi, A.
\newblock {RLAIF}: Scaling reinforcement learning from human feedback with {AI}
  feedback.
\newblock \emph{arXiv}, 2023.

\bibitem[Liu et~al.(2023)Liu, Zhao, Joshi, Khalman, Saleh, Liu, and
  Liu]{liu2023statistical}
Liu, T., Zhao, Y., Joshi, R., Khalman, M., Saleh, M., Liu, P.~J., and Liu, J.
\newblock Statistical rejection sampling improves preference optimization.
\newblock \emph{arXiv}, 2023.

\bibitem[Mnih et~al.(2016)Mnih, Badia, Mirza, Graves, Lillicrap, Harley,
  Silver, and Kavukcuoglu]{Mnih2016asynchronous}
Mnih, V., Badia, A.~P., Mirza, M., Graves, A., Lillicrap, T.~P., Harley, T.,
  Silver, D., and Kavukcuoglu, K.
\newblock Asynchronous methods for deep reinforcement learning.
\newblock In \emph{Proceedings of the International Conference on Machine
  Learning}, 2016.

\bibitem[Munos et~al.(2023)Munos, Valko, Calandriello, Azar, Rowland, Guo,
  Tang, Geist, Mesnard, Michi, Selvi, Girgin, Momchev, Bachem, Mankowitz,
  Precup, and Piot]{munos2023nash}
Munos, R., Valko, M., Calandriello, D., Azar, M.~G., Rowland, M., Guo, D.,
  Tang, Y., Geist, M., Mesnard, T., Michi, A., Selvi, M., Girgin, S., Momchev,
  N., Bachem, O., Mankowitz, D.~J., Precup, D., and Piot, B.
\newblock Nash learning from human feedback.
\newblock \emph{arXiv}, 2023.

\bibitem[Nakano et~al.(2021)Nakano, Hilton, Balaji, Wu, Ouyang, Kim, Hesse,
  Jain, Kosaraju, Saunders, Jiang, Cobbe, Eloundou, Krueger, Button, Knight,
  Chess, and Schulman]{nakano2021webgpt}
Nakano, R., Hilton, J., Balaji, S., Wu, J., Ouyang, L., Kim, C., Hesse, C.,
  Jain, S., Kosaraju, V., Saunders, W., Jiang, X., Cobbe, K., Eloundou, T.,
  Krueger, G., Button, K., Knight, M., Chess, B., and Schulman, J.
\newblock Web{GPT}: Browser-assisted question-answering with human feedback.
\newblock \emph{arXiv}, 2021.

\bibitem[OpenAI(2022)]{chatgpt}
OpenAI.
\newblock Introducing {ChatGPT}, 2022.
\newblock URL \url{https://openai.com/blog/chatgpt}.

\bibitem[Ouyang et~al.(2022)Ouyang, Wu, Jiang, Almeida, Wainwright, Mishkin,
  Zhang, Agarwal, Slama, Ray, Schulman, Hilton, Kelton, Miller, Simens, Askell,
  Welinder, Christiano, Leike, and Lowe]{InstructGPT}
Ouyang, L., Wu, J., Jiang, X., Almeida, D., Wainwright, C.~L., Mishkin, P.,
  Zhang, C., Agarwal, S., Slama, K., Ray, A., Schulman, J., Hilton, J., Kelton,
  F., Miller, L., Simens, M., Askell, A., Welinder, P., Christiano, P., Leike,
  J., and Lowe, R.
\newblock Training language models to follow instructions with human feedback.
\newblock \emph{arXiv}, 2022.

\bibitem[Pan et~al.(2022)Pan, Bhatia, and Steinhardt]{Pan2022TheEO}
Pan, A., Bhatia, K., and Steinhardt, J.
\newblock The effects of reward misspecification: Mapping and mitigating
  misaligned models.
\newblock \emph{arXiv}, 2022.

\bibitem[Pang et~al.(2022)Pang, Padmakumar, Sellam, Parikh, and
  He]{Pang2022RewardGI}
Pang, R.~Y., Padmakumar, V., Sellam, T., Parikh, A.~P., and He, H.
\newblock Reward gaming in conditional text generation.
\newblock In \emph{Annual Meeting of the Association for Computational
  Linguistics}, 2022.

\bibitem[Rafailov et~al.(2023)Rafailov, Sharma, Mitchell, Ermon, Manning, and
  Finn]{rafailov2023direct}
Rafailov, R., Sharma, A., Mitchell, E., Ermon, S., Manning, C.~D., and Finn, C.
\newblock Direct preference optimization: Your language model is secretly a
  reward model.
\newblock In \emph{Advances in Neural Information Processing Systems}, 2023.

\bibitem[Ram{\'e} et~al.(2024)Ram{\'e}, Vieillard, Hussenot, Dadashi, Cideron,
  Bachem, and Ferret]{Rame2024WARMOT}
Ram{\'e}, A., Vieillard, N., Hussenot, L., Dadashi, R., Cideron, G., Bachem,
  O., and Ferret, J.
\newblock {WARM}: On the benefits of weight averaged reward models.
\newblock \emph{arXiv}, 2024.

\bibitem[Roberts et~al.(2022)Roberts, Chung, Levskaya, Mishra, Bradbury, Andor,
  Narang, Lester, Gaffney, Mohiuddin, Hawthorne, Lewkowycz, Salcianu, van Zee,
  Austin, Goodman, Soares, Hu, Tsvyashchenko, Chowdhery, Bastings, Bulian,
  Garcia, Ni, Chen, Kenealy, Clark, Lee, Garrette, Lee-Thorp, Raffel, Shazeer,
  Ritter, Bosma, Passos, Maitin-Shepard, Fiedel, Omernick, Saeta, Sepassi,
  Spiridonov, Newlan, and Gesmundo]{roberts2022t5x}
Roberts, A., Chung, H.~W., Levskaya, A., Mishra, G., Bradbury, J., Andor, D.,
  Narang, S., Lester, B., Gaffney, C., Mohiuddin, A., Hawthorne, C., Lewkowycz,
  A., Salcianu, A., van Zee, M., Austin, J., Goodman, S., Soares, L.~B., Hu,
  H., Tsvyashchenko, S., Chowdhery, A., Bastings, J., Bulian, J., Garcia, X.,
  Ni, J., Chen, A., Kenealy, K., Clark, J.~H., Lee, S., Garrette, D.,
  Lee-Thorp, J., Raffel, C., Shazeer, N., Ritter, M., Bosma, M., Passos, A.,
  Maitin-Shepard, J., Fiedel, N., Omernick, M., Saeta, B., Sepassi, R.,
  Spiridonov, A., Newlan, J., and Gesmundo, A.
\newblock Scaling up models and data with $\texttt{t5x}$ and $\texttt{seqio}$.
\newblock \emph{arXiv}, 2022.

\bibitem[Schulman et~al.(2017)Schulman, Wolski, Dhariwal, Radford, and
  Klimov]{PPO}
Schulman, J., Wolski, F., Dhariwal, P., Radford, A., and Klimov, O.
\newblock Proximal policy optimization algorithms.
\newblock \emph{arXiv}, 2017.

\bibitem[Shashi et~al.(2018)Shashi, Cohen, and Mirella]{shashi2018dont}
Shashi, N., Cohen, S.~B., and Mirella, L.
\newblock {Don’t Give Me the Details, Just the Summary}! {T}opic-aware
  convolutional neural networks for extreme summarization.
\newblock In \emph{Proceedings of the Conference on Empirical Methods in
  Natural Language Processing}, 2018.

\bibitem[Shazeer \& Stern(2018)Shazeer and Stern]{Shazeer2018AdafactorAL}
Shazeer, N.~M. and Stern, M.
\newblock Adafactor: Adaptive learning rates with sublinear memory cost.
\newblock \emph{arXiv}, 2018.

\bibitem[Shin et~al.(2023)Shin, Dragan, and Brown]{Shin2023BenchmarksAA}
Shin, D., Dragan, A.~D., and Brown, D.~S.
\newblock Benchmarks and algorithms for offline preference-based reward
  learning.
\newblock \emph{arXiv}, 2023.

\bibitem[Singhal et~al.(2023)Singhal, Goyal, Xu, and Durrett]{Singhal2023ALW}
Singhal, P., Goyal, T., Xu, J., and Durrett, G.
\newblock A long way to go: Investigating length correlations in {RLHF}.
\newblock \emph{arXiv}, 2023.

\bibitem[Skalse et~al.(2022)Skalse, Howe, Krasheninnikov, and
  Krueger]{Skalse2022DefiningAC}
Skalse, J., Howe, N. H.~R., Krasheninnikov, D., and Krueger, D.
\newblock Defining and characterizing reward gaming.
\newblock In \emph{Advances in Neural Information Processing Systems}, 2022.

\bibitem[Stiennon et~al.(2020)Stiennon, Ouyang, Wu, Ziegler, Lowe, Voss,
  Radford, Amodei, and Christiano]{stiennon2020learning}
Stiennon, N., Ouyang, L., Wu, J., Ziegler, D., Lowe, R., Voss, C., Radford, A.,
  Amodei, D., and Christiano, P.~F.
\newblock Learning to summarize with human feedback.
\newblock In \emph{Advances in Neural Information Processing Systems}, 2020.

\bibitem[Swamy et~al.(2024)Swamy, Dann, Kidambi, Wu, and Agarwal]{Swamy2024AMA}
Swamy, G., Dann, C., Kidambi, R., Wu, Z.~S., and Agarwal, A.
\newblock A minimaximalist approach to reinforcement learning from human
  feedback.
\newblock \emph{arXiv}, 2024.

\bibitem[Tang et~al.(2024)Tang, Guo, Zheng, Calandriello, Munos, Rowland,
  Richemond, Valko, Pires, and Piot]{tang2024generalized}
Tang, Y., Guo, Z.~D., Zheng, Z., Calandriello, D., Munos, R., Rowland, M.,
  Richemond, P.~H., Valko, M., Pires, B.~{\'A}., and Piot, B.
\newblock Generalized preference optimization: A unified approach to offline
  alignment.
\newblock \emph{arXiv preprint arXiv:2402.05749}, 2024.

\bibitem[Touvron et~al.(2023)Touvron, Martin, Stone, Albert, Almahairi, Babaei,
  Bashlykov, Batra, Bhargava, Bhosale, Bikel, Blecher, Ferrer, Chen, Cucurull,
  Esiobu, Fernandes, Fu, Fu, Fuller, Gao, Goswami, Goyal, Hartshorn, Hosseini,
  Hou, Inan, Kardas, Kerkez, Khabsa, Kloumann, Korenev, Koura, Lachaux, Lavril,
  Lee, Liskovich, Lu, Mao, Martinet, Mihaylov, Mishra, Molybog, Nie, Poulton,
  Reizenstein, Rungta, Saladi, Schelten, Silva, Smith, Subramanian, Tan, Tang,
  Taylor, Williams, Kuan, Xu, Yan, Zarov, Zhang, Fan, Kambadur, Narang,
  Rodriguez, Stojnic, Edunov, and Scialom]{Touvron2023Llama2O}
Touvron, H., Martin, L., Stone, K.~R., Albert, P., Almahairi, A., Babaei, Y.,
  Bashlykov, N., Batra, S., Bhargava, P., Bhosale, S., Bikel, D.~M., Blecher,
  L., Ferrer, C.~C., Chen, M., Cucurull, G., Esiobu, D., Fernandes, J., Fu, J.,
  Fu, W., Fuller, B., Gao, C., Goswami, V., Goyal, N., Hartshorn, A.~S.,
  Hosseini, S., Hou, R., Inan, H., Kardas, M., Kerkez, V., Khabsa, M.,
  Kloumann, I.~M., Korenev, A.~V., Koura, P.~S., Lachaux, M.-A., Lavril, T.,
  Lee, J., Liskovich, D., Lu, Y., Mao, Y., Martinet, X., Mihaylov, T., Mishra,
  P., Molybog, I., Nie, Y., Poulton, A., Reizenstein, J., Rungta, R., Saladi,
  K., Schelten, A., Silva, R., Smith, E.~M., Subramanian, R., Tan, X., Tang,
  B., Taylor, R., Williams, A., Kuan, J.~X., Xu, P., Yan, Z., Zarov, I., Zhang,
  Y., Fan, A., Kambadur, M., Narang, S., Rodriguez, A., Stojnic, R., Edunov,
  S., and Scialom, T.
\newblock Llama 2: Open foundation and fine-tuned chat models.
\newblock \emph{arXiv}, 2023.

\bibitem[Tunstall et~al.(2023)Tunstall, Beeching, Lambert, Rajani, Rasul,
  Belkada, Huang, von Werra, Fourrier, Habib, Sarrazin, Sanseviero, Rush, and
  Wolf]{tunstall2023zephyr}
Tunstall, L., Beeching, E., Lambert, N., Rajani, N., Rasul, K., Belkada, Y.,
  Huang, S., von Werra, L., Fourrier, C., Habib, N., Sarrazin, N., Sanseviero,
  O., Rush, A.~M., and Wolf, T.
\newblock Zephyr: Direct distillation of {LM} alignment.
\newblock \emph{arXiv}, 2023.

\bibitem[V{\"o}lske et~al.(2017)V{\"o}lske, Potthast, Syed, and Stein]{TLDR}
V{\"o}lske, M., Potthast, M., Syed, S., and Stein, B.
\newblock {TL};{DR}: Mining {R}eddit to learn automatic summarization.
\newblock In \emph{Proceedings of the Workshop on New Frontiers in
  Summarization}. Association for Computational Linguistics, 2017.

\bibitem[Wang et~al.(2023)Wang, Jiang, Yang, Liu, and Chen]{Wang2023BeyondRK}
Wang, C., Jiang, Y., Yang, C., Liu, H., and Chen, Y.
\newblock Beyond reverse {KL}: Generalizing direct preference optimization with
  diverse divergence constraints.
\newblock \emph{arXiv}, 2023.

\bibitem[Warnell et~al.(2018)Warnell, Waytowich, Lawhern, and
  Stone]{warnell2018deep}
Warnell, G., Waytowich, N., Lawhern, V., and Stone, P.
\newblock Deep {TAMER}: Interactive agent shaping in high-dimensional state
  spaces.
\newblock In \emph{Proceedings of the AAAI Conference on Artificial
  Intelligence}, 2018.

\bibitem[Wortsman et~al.(2022)Wortsman, Ilharco, Gadre, Roelofs, Gontijo-Lopes,
  Morcos, Namkoong, Farhadi, Carmon, Kornblith, and
  Schmidt]{Wortsman2022ModelSA}
Wortsman, M., Ilharco, G., Gadre, S.~Y., Roelofs, R., Gontijo-Lopes, R.,
  Morcos, A.~S., Namkoong, H., Farhadi, A., Carmon, Y., Kornblith, S., and
  Schmidt, L.
\newblock Model soups: averaging weights of multiple fine-tuned models improves
  accuracy without increasing inference time.
\newblock In \emph{Proceedings of the International Conference on Machine
  Learning}, 2022.

\bibitem[Wu et~al.(2019)Wu, Tucker, and Nachum]{wu2019behavior}
Wu, Y., Tucker, G., and Nachum, O.
\newblock Behavior regularized offline reinforcement learning.
\newblock \emph{arXiv}, 2019.

\bibitem[Yuan et~al.(2024)Yuan, Pang, Cho, Sukhbaatar, Xu, and
  Weston]{yuan2024selfrewarding}
Yuan, W., Pang, R.~Y., Cho, K., Sukhbaatar, S., Xu, J., and Weston, J.
\newblock Self-rewarding language models, 2024.

\bibitem[Yuan et~al.(2023)Yuan, Yuan, Tan, Wang, Huang, and
  Huang]{Yuan2023RRHFRR}
Yuan, Z., Yuan, H., Tan, C., Wang, W., Huang, S., and Huang, F.
\newblock {RRHF}: Rank responses to align language models with human feedback
  without tears.
\newblock \emph{arXiv}, 2023.

\bibitem[Zhao et~al.(2023)Zhao, Joshi, Liu, Khalman, Saleh, and
  Liu]{zhao2023slichf}
Zhao, Y., Joshi, R., Liu, T., Khalman, M., Saleh, M., and Liu, P.~J.
\newblock {SLiC-HF}: Sequence likelihood calibration with human feedback.
\newblock \emph{arXiv}, 2023.

\bibitem[Zhuang \& Hadfield-Menell(2020)Zhuang and
  Hadfield-Menell]{Zhuang2021ConsequencesOM}
Zhuang, S. and Hadfield-Menell, D.
\newblock Consequences of misaligned {AI}.
\newblock In \emph{Advances in Neural Information Processing Systems}, 2020.

\end{thebibliography}

\newpage
\appendix
\onecolumn
\section*{\centering APPENDICES}

\section{Related Work}

\textbf{RLHF.} Reinforcement learning from human feedback, as introduced in \citet{christiano2017deep} (see also \citet{bai2022training,InstructGPT}) and often based on proximal policy optimisation \citep{PPO}, is a critical element of making large language models helpful and aligned with preferences of human operators. While in itself it typically does not result in improved benchmark performance \citep{Touvron2023Llama2O}, RLHF is nonetheless key to satisfying human-mediated interactions such as dialogue \citep{ nakano2021webgpt, InstructGPT}. The complexity of the RLHF procedure \citep{casper2023open}, which can also be accomplished by multiple reinforcement learning algorithms such as actor-critic \cite{Mnih2016asynchronous, glaese2022improving}, has led to searching for algorithmic alternatives \cite{Dong2023RAFTRR,Yuan2023RRHFRR,zhao2023slichf}.

\textbf{Recent developments in policy optimisation.} In the special case of an additional Bradley-Terry model \citep{bradley1952rank} assumption for the human reward model, reinforcement learning has been found redundant; this allows for casting the problem of RLHF as a supervised one \citep{rafailov2023direct}. Recent developments have focused on scaling the performance of such direct policy optimisation (DPO) methods \citep{tunstall2023zephyr, Ivison2023CamelsIA}, as well as generalising its mathematical formulation \cite{azar2023general, Wang2023BeyondRK, tang2024generalized}. One of the key issues with direct policy optimisation - and RLHF in general - resides in their propensity to game or \emph{hack} rewards \citep{Amodei2016ConcretePI, Skalse2022DefiningAC,Pan2022TheEO,Pang2022RewardGI} and become overoptimised, or under-regularised \citep{Gao2022ScalingLF, Singhal2023ALW, kirk2023understanding}, which can be mitigated e.g. by using ensembling techniques \citep{Wortsman2022ModelSA, Eisenstein2023HelpingOH,  Coste2023RewardME,Rame2024WARMOT}.
Rather than a reinforcement versus supervised learning dichotomy, it is the distinction between \emph{online} and \emph{offline} \citep{jaques2019way} methods that seems more relevant in practice, as an online policy's generations might start deviating substantially from the original dataset, leading to distribution shifts \cite{Zhuang2021ConsequencesOM,Shin2023BenchmarksAA}. Finally, alignment can also result from the self-play form of a two-player game, and not just single-policy optimisation. This perspective, taken in \citet{munos2023nash, Swamy2024AMA} has the added benefit of encompassing both online and offline settings, enabling smooth interpolation between them via a hyperparameter. In a similar vein, DPO has been shown to be able and improve thanks to iterated successive rounds \citep{yuan2024selfrewarding}, expanding on known machine-critic alignment methods such as reinforcement learning from AI feedback \cite{Bai2022ConstitutionalAH, lee2023rlaif}.

\section{Additional Experimental Results}

\subsection{Regularisation Sweep for Online and Offline}

\cref{fig:beta-sweep-both} and \cref{fig:beta-sweep-both-sft} show a sweep over the regularisation parameter for Online and Offline DPO and IPO vs. RLHF and SFT respectively. It is interesting to note that the online versions significantly outperform the offline versions. This is understandable as this setting favours tremendously online methods. Indeed, the starting point is an already fine-tuned policy on summarisation data. Therefore, for online methods the first checkpoint is already able to sample good summaries which make it very easy to obtain good rewards/preferences and from there optimise either the reward/preference model.  

\begin{figure}[h]
    \centering
    \includegraphics[width=.35\textwidth]{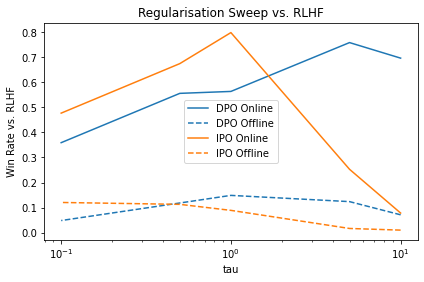}
    \caption{Sweep on Regularisation parameter vs. RL on the \emph{summarisation} task.}
    \label{fig:beta-sweep-both}
\end{figure}

\begin{figure}[h]
    \centering
    \includegraphics[width=.35\textwidth]{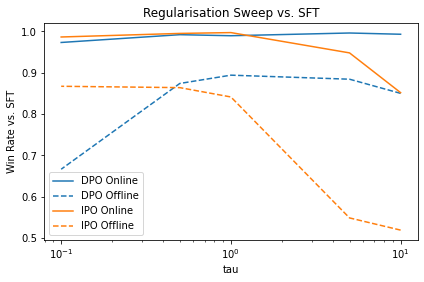}
    \caption{Sweep on Regularisation parameter vs. SFT on the \emph{summarisation} task.}
    \label{fig:beta-sweep-both-sft}
\end{figure}

\subsection{Mixing ratio curve}

Here in \cref{fig:online-mixture-sweep} we show a sweep over the mixing ratio $\MixtureParam$ for IPO-MD and how it affects its win-rate over the RL baseline in summarisation. We draw the curves for a different learning rate and different learning steps than the optimal checkpoint of IPO-MD to show that most of the time the mixing ratio still help improve the performance.

\begin{figure}[h]
    \centering
    \includegraphics[width=.35\textwidth]{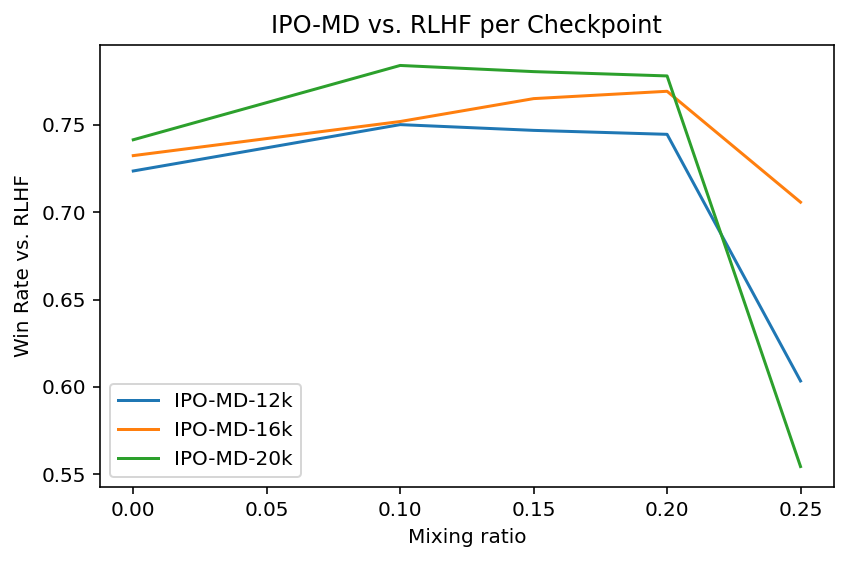}
    \caption{IPO-MD winning rate. The evolving parameter is the mixture ratio $\MixtureParam$ while we fix the learning rate to $3\cdot10^{-5}$ and the regularisation parameter to $\tau=1$. We plot this curves for $3$ different learning steps $12k$, $16k$ and $20k$.}
    \label{fig:online-mixture-sweep}
\end{figure}

\subsection{Best hyperparameters found}\label{app:hypers}
We report here the optimal $\tau$, learning rate \texttt{lr} and, where applicable, mixture ratio $\beta$ used to obtain each algorithm's best performance in Table~\ref{results:pref_sum}.

\begin{center}
{
\setlength\tabcolsep{4.7pt}
\begin{tabular}{l|c|c|c|}
 &  $\tau$ &  \texttt{lr}  &   $\beta$  \\
\midrule
 RL & 0.05 & $10^{-4}$ & N/A \\
 IPO & 1.0 & $10^{-4}$ & N/A \\
 DPO & 5.0 & $10^{-4}$ & N/A \\
 SLiC & 10.0 & $10^{-4}$ & N/A \\
 IPO-MD & 1.0 & $10^{-4}$ & 0.125 \\
 Nash-MD-PG & 0.008 & $3\cdot10^{-5}$ &	0.125 \\
\end{tabular}
}
\end{center}

\section{Proofs}\label{sec:proofs}\label{sec:gradients}

\thmOnlineIPOSelfPlay*

\begin{proof}
    We calculate expressions for the update directions directly, assuming that the policy $\pi$ is parametrised via a vector $\phi$.
    
    \textbf{Online IPO update.}
    The update direction for online IPO is given by the negative of the derivative:
    \begin{align*}
        & \nabla_\phi \mathop{\mathbb{E}}_{
        \substack{Y, Y' \sim \textcolor{red}{\texttt{SG}[\pi]} \\ Y^+, Y^- \sim \lambda_p(Y, Y')}}\bigg\lbrack \left( \log \bigg( \frac{\pi(Y)\piref(Y')}{\pi(Y') \piref(Y)} \bigg) -  \tau^{-1}/2 \right)^2 \bigg\rbrack  \\
        \propto & \sum_{y, y' \in \mathcal{Y}} \pi(y) \pi(y') p(y \succ y') \nabla_\phi \bigg(\bigg( \log\bigg(\frac{\pi(y) \piref(y')}{\pi(y') \piref(y)}\bigg) - \tau^{-1}/2 \bigg)^2 \bigg) \\
        \propto & \sum_{y, y' \in \mathcal{Y}} \pi(y) \pi(y') p(y \succ y')\bigg( \log\bigg(\frac{\pi(y) \piref(y')}{\pi(y') \piref(y)}\bigg) - \tau^{-1}/2 \bigg) \nabla_\phi \log(\pi(y)/\pi(y')) \, .
    \end{align*}
    We can simplify the above by first considering the terms with a factor of $\tau^{-1}$:
    \begin{align*}
       & - \tau^{-1}/2 \sum_{y, y' \in \mathcal{Y}} \pi(y) \pi(y') p(y \succ y') (\nabla_\phi \log \pi(y) - \nabla_\phi \log \pi(y')) \\
        = & - \tau^{-1}/2 \left(  \sum_{y \in \mathcal{Y}} \pi(y) p(y \succ \pi) \nabla_\phi \log \pi(y) - \sum_{y' \in \mathcal{Y}} p(\pi \succ y') \pi(y') \nabla_\phi \log \pi(y') \right) \\
        = & - \tau^{-1}/2 \left(  \sum_{y \in \mathcal{Y}} \pi(y) p(y \succ \pi) \nabla_\phi \log \pi(y) - \sum_{y' \in \mathcal{Y}} (1 - p(y' \succ \pi)) \pi(y') \nabla_\phi \log \pi(y') \right) \\
        = & - \tau^{-1} \sum_{y \in \mathcal{Y}} \pi(y) p(y \succ \pi) \nabla_\phi \log \pi(y) \, .
    \end{align*}
    Now considering the terms not involving $\tau$:
    \begin{align*}
        & \sum_{y, y' \in \mathcal{Y}} \pi(y) \pi(y') p(y \succ y') \Bigg(\log\bigg( \frac{\pi(y)}{\piref(y)} \bigg) - \log \bigg( \frac{\pi(y')}{\piref(y')} \bigg)  \Bigg) (\nabla_\phi \log \pi(y) - \nabla_\phi \log \pi(y')) \\
        = &  \sum_{y, y' \in \mathcal{Y}} \pi(y) \pi(y') p(y \succ y') \log\bigg( \frac{\pi(y)}{\piref(y)} \bigg)\nabla_\phi \log \pi(y)  \\
        & \qquad + \sum_{y, y' \in \mathcal{Y}} \pi(y) \pi(y') p(y \succ y') \log\bigg( \frac{\pi(y')}{\piref(y')} \bigg)\nabla_\phi \log \pi(y') \\
        & \qquad - \sum_{y, y' \in \mathcal{Y}} \pi(y) \pi(y') p(y \succ y') \log\bigg( \frac{\pi(y)}{\piref(y)} \bigg)\nabla_\phi \log \pi(y') \\
        & \qquad - \sum_{y, y' \in \mathcal{Y}} \pi(y) \pi(y') p(y \succ y') \log\bigg( \frac{\pi(y')}{\piref(y')} \bigg)\nabla_\phi \log \pi(y) \\
        = &  \sum_{y, y' \in \mathcal{Y}} \pi(y) p(y \succ \pi) \log\bigg( \frac{\pi(y)}{\piref(y)} \bigg)\nabla_\phi \log \pi(y) + \sum_{y, y' \in \mathcal{Y}}  \pi(y') p(\pi \succ y') \log\bigg( \frac{\pi(y')}{\piref(y')} \bigg)\nabla_\phi \log \pi(y') \\
        & \qquad - \sum_{y, y' \in \mathcal{Y}} \pi(y) \pi(y') p(y \succ y') \log\bigg( \frac{\pi(y)}{\piref(y)} \bigg)\nabla_\phi \log \pi(y') \\
        & \qquad - \sum_{y, y' \in \mathcal{Y}} \pi(y) \pi(y') (1 - p(y \succ y')) \log\bigg( \frac{\pi(y)}{\piref(y)} \bigg)\nabla_\phi \log \pi(y') \\
        = &  \sum_{y, y' \in \mathcal{Y}} \pi(y) p(y \succ \pi) \log\bigg( \frac{\pi(y)}{\piref(y)} \bigg)\nabla_\phi \log \pi(y) + \sum_{y, y' \in \mathcal{Y}}  \pi(y') (1 - p(y \succ \pi)) \log\bigg( \frac{\pi(y')}{\piref(y')} \bigg)\nabla_\phi \log \pi(y') \\
        & \qquad - \sum_{y, y' \in \mathcal{Y}} \pi(y) \pi(y')  \log\bigg( \frac{\pi(y)}{\piref(y)} \bigg)\nabla_\phi \log \pi(y') \\
        = & \sum_{y, y' \in \mathcal{Y}}  \pi(y') \log\bigg( \frac{\pi(y')}{\piref(y')} \bigg)\nabla_\phi \log \pi(y') \\
        & \qquad - \sum_{y, y' \in \mathcal{Y}} \pi(y) \pi(y')  \log\bigg( \frac{\pi(y)}{\piref(y)} \bigg)\nabla_\phi \log \pi(y') \\
        = & \sum_{y, y' \in \mathcal{Y}}  \pi(y') \log\bigg( \frac{\pi(y')}{\piref(y')} \bigg)\nabla_\phi \log \pi(y') \, .
    \end{align*}
    
    \textbf{Self-play update.}
    Self-play leads to the update direction given by
    \begin{align*}
        & \nabla_\phi \Bigg[ \sum_{y,y' \in \mathcal{Y}} \pi(y) \texttt{SG}[\pi(y')] p(y \succ y') 
         - \tau \sum_{y \in \mathcal{Y}} \pi(y) \log(\pi(y) / \piref(y))  \Bigg] \\
         = & \sum_{y,y' \in \mathcal{Y}} \pi(y)  \pi(y') p(y \succ y') \nabla_\phi \log \pi(y)
         - \tau \sum_{y \in \mathcal{Y}} \pi(y) \log(\pi(y) / \piref(y)) \nabla_\phi \log \pi(y)  \\
         = & \sum_{y} \pi(y)  p(y \succ \pi) \nabla_\phi \log \pi(y)
         - \tau \sum_{y \in \mathcal{Y}} \pi(y) \log(\pi(y) / \piref(y)) \nabla_\phi \log \pi(y) \, .
    \end{align*}
    Therefore the expected online IPO update direction is exactly the same as that of self-play.
\end{proof}

\propIPOGradients*

\begin{proof}
The gradient of Nash-MD-PG($\MixtureParam$) is:
\begin{eqnarray*}
g_{\mbox{{\tiny Nash-MD-PG}}(\MixtureParam)} &=& - {\mathbb E}_{y\sim\pi, y'\sim\pi'}\left[\nabla\log\pi(y)\left( p(y\succ y')-\frac 12 \tau\log\frac{\pi(y)}{\piref(y)}\right)\right] \\
&=& - {\mathbb E}_{y\sim\pi}\Big[
\underbrace{\nabla\log\pi(y)\left( p(y\succ \pi')-\frac 12-\tau\log\frac{\pi(y)}{\piref(y)}\right)}_{g(y)}\Big] \\
\end{eqnarray*}

where we write $\pi'=(\pi)^{1-\MixtureParam}(\piref)^\MixtureParam$.

Now the gradient of IPO-MD($\MixtureParam$) is:
\begin{eqnarray*}
g_{\mbox{{\tiny IPO-MD}}(\MixtureParam)} &=& {\mathbb E}_{y, y'\sim\pi'}\left[ p(y\succ y') \nabla \left( \rho_\pi(y,y') - \frac{1}{2\tau} \right)^2\right] \\
&=& 2 {\mathbb E}_{y, y'\sim\pi'}\left[ p(y\succ y') \left( \rho_\pi(y,y') - \frac{1}{2\tau} \right)\nabla \rho_\pi(y,y') \right], 
\end{eqnarray*}
where $\rho_\pi(y,y') = \log\frac{\pi(y)}{\piref(y)} - \log\frac{\pi(y')}{\piref(y')}$, thus  $\nabla \rho_\pi(y,y') = \nabla \log\pi(y) - \nabla \log\pi(y')$.

Using the anti-symmetry of the preference model, i.e., $p(y\succ y')=1-p(y'\succ y)$, and combining terms, we have that
\begin{eqnarray*}
{\mathbb E}_{y, y'\sim\pi'} \left[p(y\succ y') \nabla \rho_\pi(y,y') \right] &=& {\mathbb E}_{y\sim\pi'} \left[p(y\succ \pi')  \nabla \log\pi(y) \right] - {\mathbb E}_{y'\sim\pi'} \left[p(\pi'\succ y')  \nabla \log\pi(y') \right]\\
 &=& 2 {\mathbb E}_{y\sim\pi'} \left[\nabla \log\pi(y) \left( p(y\succ \pi') -\frac 12\right)  \right].
\end{eqnarray*}
Similarly:
\begin{eqnarray*}
& &{\mathbb E}_{y, y'\sim\pi'} \left[p(y\succ y') \rho_\pi(y,y') \nabla \rho_\pi(y,y') \right] \\
&=& {\mathbb E}_{y,y'\sim\pi'} \left[p(y\succ y')  \left(\log\frac{\pi(y)}{\piref(y)} - \log\frac{\pi(y')}{\piref(y')}\right) [\nabla \log\pi(y)-\nabla \log\pi(y')] \right],\\
&=& {\mathbb E}_{y,y'\sim\pi'} \left[p(y\succ y') 
\log\frac{\pi(y)}{\piref(y)}\nabla \log\pi(y) + (1-p(y'\succ y))\log\frac{\pi(y')}{\piref(y')}\nabla \log\pi(y')\right] \\
& & - {\mathbb E}_{y,y'\sim\pi'} \left[p(y\succ y') 
\log\frac{\pi(y)}{\piref(y)}\nabla \log\pi(y') + (1-p(y'\succ y)) \log\frac{\pi(y')}{\piref(y')}\nabla \log\pi(y)\right]\\
&=& {\mathbb E}_{y,y'\sim\pi'} \left[
\log\frac{\pi(y)}{\piref(y)}\left( \nabla \log\pi(y) - 
\nabla \log\pi(y') \right)\right]\\
\end{eqnarray*}
We deduce that:
\begin{eqnarray*}
g_{\mbox{{\tiny IPO-MD}}(\MixtureParam)} &=&-\frac{2}{\tau} {\mathbb E}_{y\sim\pi'} \left[\nabla \log\pi(y) \left( p(y\succ \pi') -\frac 12\right)  \right] + 2 {\mathbb E}_{y,y'\sim\pi'} \left[
\log\frac{\pi(y)}{\piref(y)}\left( \nabla \log\pi(y) - 
\nabla \log\pi(y') \right)\right]\\
&=&-\frac{2}{\tau} {\mathbb E}_{y,y'\sim\pi'} \left[\nabla \log\pi(y) \left( p(y\succ \pi') -\frac 12\right) - \tau 
\log\frac{\pi(y)}{\piref(y)}\left( \nabla \log\pi(y) - 
\nabla \log\pi(y') \right)\right]\\
&=&-\frac{2}{\tau} {\mathbb E}_{y\sim\pi'} \Big[
\underbrace{\nabla \log\pi(y) \left( p(y\succ \pi') -\frac 12 - \tau \log\frac{\pi(y)}{\piref(y)}\right) }_{=g(y)}
\Big] - 2 {\mathbb E}_{y\sim\pi'} \left[
\log\frac{\pi(y)}{\piref(y)} \right] 
\underbrace{{\mathbb E}_{y\sim\pi'} \left[\nabla \log\pi(y)\right]}_{=0}
\end{eqnarray*}
where we used that ${\mathbb E}_{y\sim\pi'} \left[\nabla \log\pi(y)\right] = \frac{1}{1-\MixtureParam} {\mathbb E}_{y\sim\pi'} \left[\nabla \log\pi'(y)\right]=0$, from the definition of $\pi'$.

We deduce that 
\begin{align*}
    g_{\mbox{{\tiny Nash-MD-PG}}(\MixtureParam)} =& - {\mathbb E}_{y\sim\pi}\left[g(y)\right]\\
    g_{\mbox{{\tiny IPO-MD}}(\MixtureParam)} =& - \frac{2}{\tau}{\mathbb E}_{y\sim\pi'}\left[g(y)\right].  \qedhere
\end{align*}
\end{proof}

\section{Comparison of the variance of contrastive versus non-contrastive gradient estimates}\label{sec:contrastive.vs.non.contrastive}

Define the following gradient estimates based on non-contrastive vs contrastive loss functions:
    \begin{eqnarray*}
        \hat g_{\mbox{{\tiny non-contrastive}}} &=& - \nabla\log\pi(y) \left(  p(y\succ y') -\frac 12 -\tau\log\frac{\pi(y)}{\piref(y)}
        +\tau\log\frac{\pi(y')}{\piref(y')}\right)\\
        \hat g_{\mbox{{\tiny contrastive}}} &=& - \frac 12 \left(\nabla\log\pi(y) -\nabla\log\pi(y')\right) \left( p(y\succ y') -\frac 12 -\tau\log\frac{\pi(y)}{\piref(y)}
        +\tau\log\frac{\pi(y')}{\piref(y')}\right).
    \end{eqnarray*}
These two estimate resemble those of the algorithms Self-Play (as implemented by Nash-MD-PG($\MixtureParam=0$)), which uses a non-contrastive loss, and online-IPO (equivalent to IPO-MD($\MixtureParam=0$)), which uses a contrastive loss, and we have
\begin{eqnarray*}
        g_{\mbox{{\tiny Self-Play}}} &=& {\mathbb E}_{y,y'\sim\pi}\left[\hat g_{\mbox{{\tiny non-contrastive}}}\right]\\
        g_{\mbox{{\tiny online-IPO}}} &=& \frac{2}{\tau} {\mathbb E}_{y,y'\sim\pi}\left[\hat g_{\mbox{{\tiny contrastive}}}\right].
\end{eqnarray*}

We know from the previous result that these two estimates have the same expectation. However their variance may differ. Their respective variance depends on a non-trivial combinaison of the policy representation and the specifics of the preference model. We now state a sufficient condition under which the contrastive gradient estimate has lower variance than its non-contrastive counterpart.
\nocite{hejna2023contrastive}

\begin{proposition}\label{prop:condition.for.variance.reduction}
If the policy representation and the preference model are such that we have
\begin{equation}\label{eqn:condition.variance.reduction}
{\mathbb E}_{y,y'\sim\pi}\left[ \nabla\log\pi(y)\nabla\log\pi(y')f(y,y')^2\right] \geq 0,
\end{equation}
where $f(y,y'):= p(y\succ y') -\frac 12 -\tau\log\frac{\pi(y)}{\piref(y)}
        +\tau\log\frac{\pi(y')}{\piref(y')}$,
then the variance of the contrastive gradient estimate is at least as small as that of the non-contrastive one: $\mbox{Var}(\hat g_{\mbox{{\tiny contrastive}}}) \leq \mbox{Var}(\hat g_{\mbox{{\tiny non-contrastive}}})$.
\end{proposition}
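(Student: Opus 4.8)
The plan is to exploit the antisymmetry of the preference residual together with the exchangeability of the two i.i.d.\ samples $y,y'\sim\pi$. Write $\hat g_n$ for $\hat g_{\mbox{{\tiny non-contrastive}}}$ and $\hat g_c$ for $\hat g_{\mbox{{\tiny contrastive}}}$, and set $u=\nabla\log\pi(y)$, $u'=\nabla\log\pi(y')$, $f=f(y,y')$ as in \eqref{eqn:condition.variance.reduction}, so that $\hat g_n=-uf$ and $\hat g_c=-\tfrac12(u-u')f$. The crucial structural fact is that $f$ is antisymmetric: using $p(y'\succ y)=1-p(y\succ y')$ one checks $f(y',y)=-f(y,y')$, and hence $f^2$ is symmetric under the swap $y\leftrightarrow y'$.

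First I would record that the two estimates share the same expectation: swapping the dummy variables and using $f(y',y)=-f(y,y')$ gives $\mathbb{E}[u'f]=-\mathbb{E}[uf]$, so $\mathbb{E}[\hat g_c]=-\tfrac12\mathbb{E}[(u-u')f]=-\mathbb{E}[uf]=\mathbb{E}[\hat g_n]$, consistent with the equality of means noted before the statement. Consequently the comparison of variances reduces to a comparison of second moments, $\mathrm{Var}(\hat g_n)-\mathrm{Var}(\hat g_c)=\mathbb{E}\|\hat g_n\|^2-\mathbb{E}\|\hat g_c\|^2$ (or of $\mathbb{E}[\hat g_n\hat g_n^\top]-\mathbb{E}[\hat g_c\hat g_c^\top]$ if one works in the Loewner order).

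The main computation is then to expand $\|\hat g_c\|^2=\tfrac14\|u-u'\|^2f^2=\tfrac14(\|u\|^2-2\,u\cdot u'+\|u'\|^2)f^2$ and take expectations. By exchangeability $\mathbb{E}[\|u\|^2f^2]=\mathbb{E}[\|u'\|^2f^2]$, so $\mathbb{E}\|\hat g_c\|^2=\tfrac12\mathbb{E}[\|u\|^2f^2]-\tfrac12\mathbb{E}[u\cdot u'\,f^2]$, whereas $\mathbb{E}\|\hat g_n\|^2=\mathbb{E}[\|u\|^2f^2]$. Subtracting yields $\mathrm{Var}(\hat g_n)-\mathrm{Var}(\hat g_c)=\tfrac12\mathbb{E}[\|u\|^2f^2]+\tfrac12\mathbb{E}[u\cdot u'\,f^2]$. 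The first term is nonnegative because it is the expectation of $\|u\|^2f^2\ge 0$; the second term is exactly $\tfrac12$ times the quantity in \eqref{eqn:condition.variance.reduction}, hence nonnegative by hypothesis. Therefore the difference is nonnegative, i.e.\ $\mathrm{Var}(\hat g_c)\le\mathrm{Var}(\hat g_n)$.

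I expect the only delicate point to be the bookkeeping in the symmetrisation step, namely pairing the cross terms $u\cdot u'$ correctly and invoking exchangeability together with the antisymmetry of $f$, and, relatedly, fixing the precise meaning of ``variance'' for a vector-valued estimate. If $\nabla\log\pi(y)\nabla\log\pi(y')$ in \eqref{eqn:condition.variance.reduction} is read as the inner product, the scalar (total) variance argument above applies verbatim; if it is read as the outer product $\nabla\log\pi(y)\nabla\log\pi(y')^\top$ with $\ge 0$ in the Loewner sense, the identical manipulation, replacing $\|u\|^2f^2$ by $uu^\top f^2$ and $u\cdot u'\,f^2$ by $uu'^\top f^2$, gives the matrix inequality, since $\mathbb{E}[uu^\top f^2]\succeq 0$ automatically and the cross term $\mathbb{E}[uu'^\top f^2]$ is symmetric by the same swap.
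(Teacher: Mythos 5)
Your proof is correct and follows essentially the same route as the paper's: both write the contrastive estimate as the average of the non-contrastive estimate and its swapped counterpart, exploit the antisymmetry $f(y',y)=-f(y,y')$ together with exchangeability of $y,y'$, and arrive at the identical bound $\mathrm{Var}(\hat g_{\mbox{{\tiny non-contrastive}}})-\mathrm{Var}(\hat g_{\mbox{{\tiny contrastive}}})=\tfrac12\,\mathbb{E}[\|\nabla\log\pi(y)\|^2f^2]+\tfrac12\,\mathbb{E}[\nabla\log\pi(y)\cdot\nabla\log\pi(y')f^2]\ge 0$. The paper phrases the middle step in the language of antithetic variates (showing $\mathrm{Cov}(X_1,X_2)\le 0$) whereas you expand second moments directly, but the algebra is the same; your closing remark disambiguating the inner-product versus Loewner-order reading of the hypothesis is a welcome clarification the paper leaves implicit.
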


\begin{proof}
Defining the random variables $X_1(y,y'):=- \nabla\log\pi(y) f(y,y')$ and $X_2(y,y'):=\nabla\log\pi(y') f(y,y')$,
we have
\begin{eqnarray*}
        \hat g_{\mbox{{\tiny non-contrastive}}} &=& X_1(y,y')\\
        \hat g_{\mbox{{\tiny contrastive}}} &=& \frac{X_1(y,y')+X_2(y,y')}{2}.
\end{eqnarray*}

Now, let us compare the variance of these estimates when $y$ and $y'$ are independently drawn from the same distribution $\pi$. The variance of the contrastive estimate is
$$\mbox{Var}\left(\frac{X_1+X_2}{2}\right)=\frac{\mbox{Var}(X_1)+\mbox{Var}(X_2)+2 \mbox{Cov}(X_1, X_2)}{4} = \frac 12 \left(\mbox{Var}(X_1) + \mbox{Cov}(X_1, X_2)\right).$$
This variance is lower than $\mbox{Var}(X_1)$ as soon as $\mbox{Cov}(X_1, X_2)$ is negative (this is the principle of antithetic variates for variance reduction).

Using the property that $f(y,y')=-f(y',y)$, and writing $c={\mathbb E}_{y,y'\sim\pi}\left[ \nabla\log\pi(y)f(y,y')\right] = - {\mathbb E}_{y,y'\sim\pi}\left[ \nabla\log\pi(y')f(y,y')\right]$, we have:
\begin{eqnarray*}
\mbox{Cov}(X_1, X_2)&=& {\mathbb E}_{y,y'\sim\pi}\left[ \left(-\nabla\log\pi(y)f(y,y')+c \right) \left(\nabla\log\pi(y')f(y,y')+c\right)\right]\\
&=&{\mathbb E}_{y,y'\sim\pi}\left[ -\nabla\log\pi(y)\nabla\log\pi(y')f(y,y')^2\right] - c^2.
\end{eqnarray*}

We deduce that a sufficient condition for the contrastive estimate to have a lower variance than that of the non-contrastive estimate is:
$${\mathbb E}_{y,y'\sim\pi}\left[ \nabla\log\pi(y)\nabla\log\pi(y')f(y,y')^2\right] + \left({\mathbb E}_{y,y'\sim\pi}\left[ \nabla\log\pi(y)f(y,y')\right]\right)^2\geq 0.$$
This condition is true as soon as Equation~\eqref{eqn:condition.variance.reduction} is satisfied.
\end{proof}

\clearpage

\section{Tabular example}
\label{sec:dynamics}

To build some intuition for IPO-MD, in Figure~\ref{fig:example} we provide a plot of the trajectories obtained with a variety of values of $\MixtureParam$ for the game with preference probabilities as displayed below, with $\tau=0.1$, and $\piref$ set to the uniform policy.

\begin{align*}
    \begin{pmatrix}
    0.5 & 0.8 & 0.1 \\
    0.1 & 0.5 & 0.8 \\
    0.9 & 0.1 & 0.5
    \end{pmatrix}
\end{align*}

\begin{figure}[h]
    \centering
    \includegraphics[width=.9\textwidth]{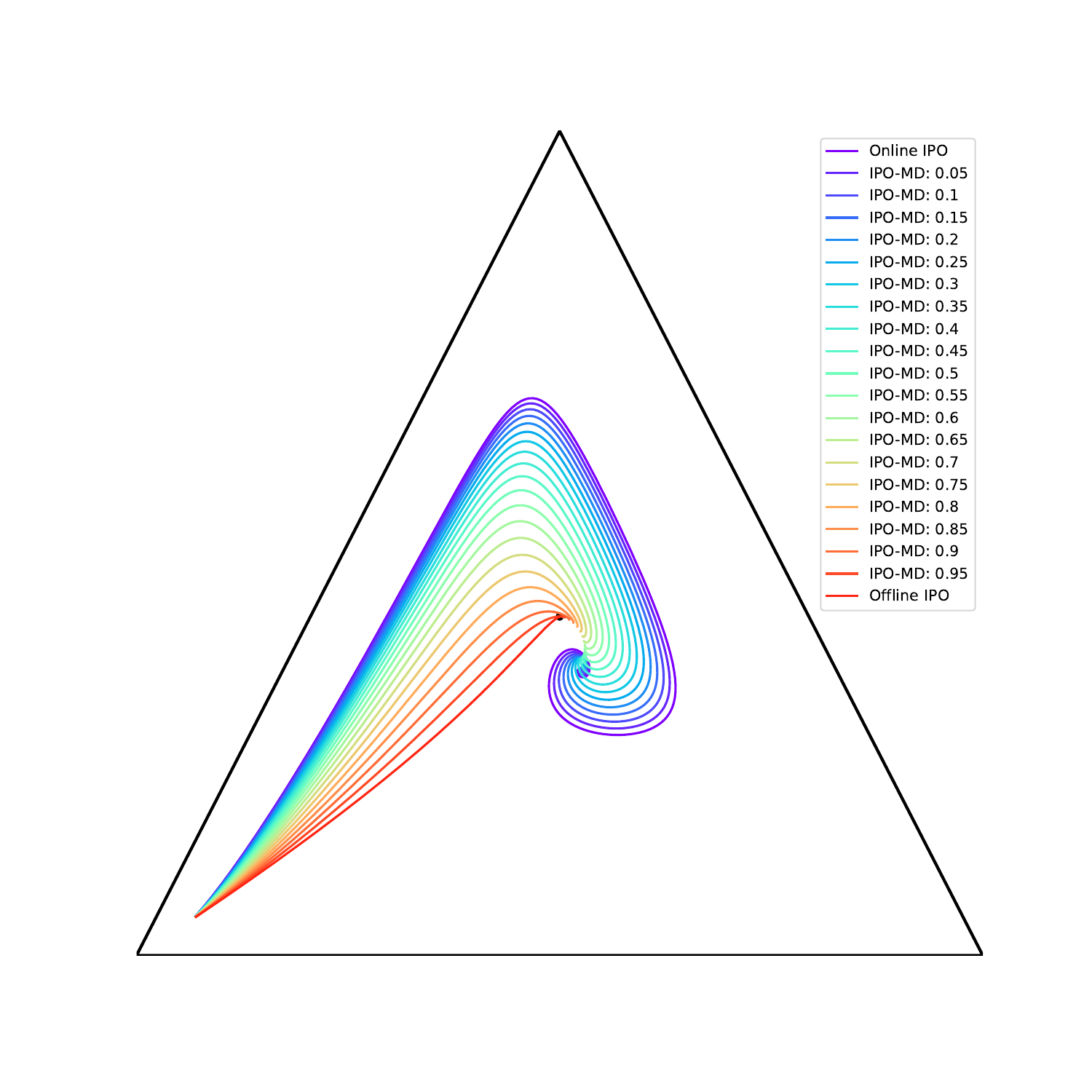}
    \caption{Online, offline, and MD variants of IPO.}
    \label{fig:example}
\end{figure}

\clearpage

\section{Supplementary Theoretical Study of Online DPO}
\label{sec:online-dpo-supp}

In this section, we explore whether online DPO is related to the regularised game given \cref{eq:payoff}, given that \cref{prop:online-ipo-nash} shows a similar relationship for online IPO and the regularised game.

Our strategy is to derive the gradient of the offline DPO objective (\cref{lem:offline-dpo-gradient}), and then inspect whether different sampling distributions $\mu$ and candidate solutions $\pi$ satisfy the KKT conditions \citep{boyd2004convex} for the minimising the objective.
We can also explore online DPO, by inspecting the KKT conditions when the sampling distribution matches the candidate solution ($\mu = \pi$).
For example, in \cref{lem:online-dpo-kkt} we present conditions for the solution of the regularised game (see \cref{eq:online-fixed-point}) to also be a stationary point of online DPO.
Given that the minimiser of the online online IPO objective is the Nash equilibrium of the regularised game given in \cref{eq:payoff} (\cref{prop:online-ipo-nash}), \cref{lem:online-dpo-kkt} gives us a condition for the online DPO problem to be equivalent (solution-wise) to the online IPO problem.
However, the condition seems difficult to satisfy: For example, there is no 2-action problem for which the condition is satisfied with the exception when preferences are uniform ($p(1 \succ 2) = \frac{1}{2}$).
In this sense, apart from the trivial uniform-preference case, online DPO and online IPO are different objectives when $|\mathcal{Y}| = 2$.

Regaring stationary points of online DPO, we show that, under the Bradley-Terry model assumption, the RLHF solution (\cref{eq:rlhf-closed-form}) is a stationary point of online DPO  (\cref{thm:online-dpo-kkt}).
We note in passing (\cref{remark:problematic-dpo-solutions}) that the RLHF solution is not the only solution of offline DPO when $\mu(y) = 0$ for some $y \in \mathcal{Y}$.
In fact, there are infinitely many, with arbitrarily small probabilities for $y$ such that $\mu(y) > 0$.
So while we can say that stationary points of online DPO are also stationary points of offline DPO, a solution for the offline DPO problem with a given $\mu$ may not be a solution for corresponding the online DPO problem, because of what happens for $y$ such that $\mu(y) = 0$.

\subsection{Results and Proofs}

We let $\Delta^\circ(\mathcal{Y}) \doteq \left\{ p \in \Delta(\mathcal{Y}) : p(y) > 0, \forall y \in \mathcal{Y} \right\}$ be the interior of the simplex.

\begin{proposition}
\label{prop:log-sigmoid-derivative}
For $t \in \mathbb{R}$, we have
\[
  \frac{d}{dt}\log\sigma(t) = \sigma(-t).
\]
\end{proposition}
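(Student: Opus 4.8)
The plan is to differentiate via the chain rule, reducing everything to two elementary facts about the logistic function $\sigma(t) = 1/(1+e^{-t})$. First I would record the standard derivative identity $\sigma'(t) = \sigma(t)\bigl(1-\sigma(t)\bigr)$, obtained by applying the quotient rule to $\sigma(t) = 1/(1+e^{-t})$ (or equivalently to $\sigma(t) = e^{t}/(1+e^{t})$). The chain rule then gives
\[
  \frac{d}{dt}\log\sigma(t) = \frac{\sigma'(t)}{\sigma(t)} = 1-\sigma(t).
\]

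The second step is to recognise $1-\sigma(t)$ as $\sigma(-t)$, i.e.\ the symmetry of the logistic function. Explicitly,
\[
  1-\sigma(t) = 1-\frac{1}{1+e^{-t}} = \frac{e^{-t}}{1+e^{-t}} = \frac{1}{1+e^{t}} = \sigma(-t),
\]
which completes the proof. This symmetry is precisely the continuous analogue of the preference symmetry $p(y'\succ y)=1-p(y\succ y')$ assumed throughout the paper. An equivalent route that sidesteps the derivative identity is to write $\log\sigma(t) = -\log(1+e^{-t})$ and differentiate directly, yielding $e^{-t}/(1+e^{-t})$, then simplify as above.

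Since this is an elementary calculus identity, there is no real obstacle; the only point requiring any care is keeping the sign conventions straight and correctly invoking the symmetry $1-\sigma(t)=\sigma(-t)$ at the end, rather than leaving the answer in the form $1-\sigma(t)$.
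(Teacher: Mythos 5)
Your proof is correct and essentially matches the paper's, which simply writes $\log\sigma(t) = -\log(1+e^{-t})$ and differentiates directly to get $e^{-t}/(1+e^{-t}) = 1/(1+e^{t}) = \sigma(-t)$ --- exactly the alternative route you mention at the end. Your main route via $\sigma'(t)=\sigma(t)(1-\sigma(t))$ and the symmetry $1-\sigma(t)=\sigma(-t)$ is a trivially equivalent rearrangement of the same elementary computation.
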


\begin{proof}
We have:
\begin{align*}
    \frac{d}{dt}\log\sigma(t) 
    &= \frac{d}{dt} -\log(1 + e^{-t})
    = -\frac{1}{1 + e^{-t}}\cdot\frac{d}{dt}(1 + e^{-t}) \\
    &= \frac{e^{-t}}{1 + e^{-t}}
    = \frac{1}{1 + e^t}
    = \sigma(-t).
\end{align*}
\end{proof}

\begin{proposition}
\label{prop:sigmoid-sums-to-one}
For $t \in \mathbb{R}$, we have
\[
  \sigma(t) + \sigma(-t) = 1
\]
\end{proposition}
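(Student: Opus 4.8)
The plan is to prove this directly from the definition of the logistic function, which the paper records (in \cref{sec:rlhf}) as $\sigma(t) = \frac{e^{t}}{1 + e^{t}}$, equivalently $\sigma(t) = \frac{1}{1 + e^{-t}}$. Since the claim is purely an algebraic identity in one real variable, no structural machinery is needed; the entire argument is a single common-denominator computation.

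Concretely, I would first write $\sigma(t) = \frac{e^{t}}{1 + e^{t}}$ and substitute $-t$ to obtain $\sigma(-t) = \frac{e^{-t}}{1 + e^{-t}}$. The cleanest route is then to rewrite $\sigma(-t)$ by multiplying numerator and denominator by $e^{t}$, giving $\sigma(-t) = \frac{1}{e^{t} + 1}$, so that both $\sigma(t)$ and $\sigma(-t)$ share the denominator $1 + e^{t}$. Adding them yields $\frac{e^{t} + 1}{1 + e^{t}} = 1$, which is the desired identity. Alternatively, one could start from the $\frac{1}{1+e^{-t}}$ form and note that $\sigma(-t) = \frac{1}{1 + e^{t}}$, again placing both fractions over the common denominator $(1 + e^{-t})(1 + e^{t})$; but the first approach avoids expanding that product and is marginally shorter.

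There is no genuine obstacle here: the statement is an immediate consequence of the definition, and every step is a routine manipulation of exponentials. The only thing to be careful about is keeping the two representations of $\sigma$ consistent (whether one writes $e^{t}/(1+e^{t})$ or $1/(1+e^{-t})$) so that the algebra matches cleanly; choosing a single representation for both terms before adding makes the cancellation transparent. This identity is worth isolating because it is used implicitly elsewhere (for instance in the symmetry condition $p(y' \succ y) = 1 - p(y \succ y')$ under a Bradley--Terry model, and alongside \cref{prop:log-sigmoid-derivative}) when manipulating the DPO objective.
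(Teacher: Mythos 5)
Your proposal is correct and follows essentially the same route as the paper: both reduce $\sigma(t)$ and $\sigma(-t)$ to a common denominator by multiplying one fraction by $e^{t}/e^{t}$ (or $e^{-t}/e^{-t}$) and then add, the only cosmetic difference being which of the two equivalent representations of $\sigma$ you start from.
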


\begin{proof}
We have:
\begin{align*}
    \sigma(t) + \sigma(-t) 
    = \frac{1}{1 + e^{-t}} + \frac{1}{1 + e^t}
    = \frac{1}{1 + e^{t}} + \frac{e^{-t}}{1 + e^{-t}}
    = 1.
\end{align*}
\end{proof}

\begin{lemma}
\label{lem:offline-dpo-gradient}
    Assume that $|\mathcal{Y}| < \infty$ and that $p(y \succ y') = 1 - p(y' \succ y)$ for all $y, y' \in \mathcal{Y}$.
    The offline DPO problem with sampling distribution $\mu$ can be written as
    \[
        \sup_{\pi \in \Delta(\mathcal{Y})} J_{\mathrm{DPO}}(\pi),
    \]
    where 
    \begin{align*}
    J_{\mathrm{DPO}}(\pi) &\doteq \sum_{y, y'} \mu(y)\mu(y')\left( p(y \succ y')\log\sigma\left(\tau\log\frac{\pi(y)\piref(y')}{\pi(y')\piref(y)}\right)\right. \\
    &\phantom{\doteq}+ \left.(1 - p(y \succ y'))\log\sigma\left(-\tau\log\frac{\pi(y)\piref(y')}{\pi(y')\piref(y)}\right) \right).
    \end{align*}
    Moreover, for $\pi \in \Delta^\circ(\mathcal{Y})$,
    \begin{equation}
        \label{eq:offline-dpo-gradient}
        \nabla J_{\mathrm{DPO}}(\pi)_y =
        2\tau\frac{\mu(y)}{\pi(y)}\sum_{y'}\mu(y') \left( p(y \succ y') - \sigma\left(\tau\log\frac{\pi(y)\piref(y')}{\pi(y')\piref(y)}\right)\right).
    \end{equation}
\end{lemma}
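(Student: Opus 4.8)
The plan is to start from the population form of the DPO loss and then differentiate it coordinate-by-coordinate, relying on the two elementary sigmoid identities already recorded.

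\textbf{Rewriting the objective.} First I would take the empirical DPO objective in~\eqref{eq:reward-obj-2} and replace the average over the dataset $(y_i^+, y_i^-)_{i=1}^N$ by the expectation under i.i.d.\ sampling from $(\mu, \lambda_p)$, exactly as was done to obtain the population IPO loss. Writing $Y, Y' \sim \mu$ and then drawing $(Y^+, Y^-) \sim \lambda_p(Y, Y')$, I would resolve the inner expectation over $\lambda_p$ directly from its definition: for fixed $Y = y$, $Y' = y'$ the pair equals $(y, y')$ with probability $p(y \succ y')$ and $(y', y)$ with probability $1 - p(y \succ y')$. Substituting, and using the identity $\log\frac{\pi(y')\piref(y)}{\pi(y)\piref(y')} = -\log\frac{\pi(y)\piref(y')}{\pi(y')\piref(y)}$ to flip the argument of $\log\sigma$ in the second case, yields precisely the claimed expression for $J_{\mathrm{DPO}}(\pi)$; since DPO maximises this objective, the problem is the stated $\sup$.

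\textbf{Differentiating the integrand.} For the gradient I would abbreviate $s(y,y') := \tau\log\frac{\pi(y)\piref(y')}{\pi(y')\piref(y)}$, noting the antisymmetry $s(y',y) = -s(y,y')$. Treating the coordinates $(\pi(y))_{y\in\mathcal{Y}}$ as free variables, one has $\frac{\partial s(z,z')}{\partial \pi(y)} = \frac{\tau}{\pi(y)}\bigl(\mathbf{1}[z=y] - \mathbf{1}[z'=y]\bigr)$, which is well defined exactly because $\pi \in \Delta^\circ(\mathcal{Y})$. Applying the chain rule together with \cref{prop:log-sigmoid-derivative}, the derivative of the integrand $h(z,z') := p(z\succ z')\log\sigma(s(z,z')) + (1-p(z\succ z'))\log\sigma(-s(z,z'))$ carries the scalar factor $p(z\succ z')\,\sigma(-s(z,z')) - (1-p(z\succ z'))\,\sigma(s(z,z'))$, which collapses to $p(z\succ z') - \sigma(s(z,z'))$ after invoking \cref{prop:sigmoid-sums-to-one}.

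\textbf{Combining the two contributions.} The last step is to sum $\mu(z)\mu(z')\,\frac{\partial h(z,z')}{\partial \pi(y)}$ over all pairs $(z,z')$; the indicator factor splits this into a contribution from $z = y$ and one from $z' = y$. For the $z'=y$ term I would use the antisymmetry of $p$ (from the standing assumption $p(y\succ y') = 1-p(y'\succ y)$) together with $\sigma(s(z,y)) = \sigma(-s(y,z)) = 1 - \sigma(s(y,z))$ (again \cref{prop:sigmoid-sums-to-one}) to show $p(z\succ y) - \sigma(s(z,y)) = -\bigl(p(y\succ z) - \sigma(s(y,z))\bigr)$. This extra minus sign cancels the sign coming from the indicator, so after relabelling the two contributions coincide, producing the factor of $2$ and the formula in~\eqref{eq:offline-dpo-gradient}.

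I expect the only genuine subtlety to be the bookkeeping in this final step: carefully tracking the $z=y$ and $z'=y$ contributions and checking that the antisymmetries of $p$ and of $\sigma(s(\cdot,\cdot))$ align so that the two terms \emph{add} rather than cancel. Everything else is a direct, routine application of the two sigmoid identities established earlier.
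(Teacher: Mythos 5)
Your proposal is correct and follows essentially the same route as the paper's proof: resolve the $\lambda_p$ expectation to obtain $J_{\mathrm{DPO}}$, apply the chain rule with the identities $\frac{d}{dt}\log\sigma(t)=\sigma(-t)$ and $\sigma(t)+\sigma(-t)=1$ to collapse the integrand's derivative to $p-\sigma(s)$, and use the antisymmetries of $p$ and $s$ to merge the $z=y$ and $z'=y$ contributions into the factor of $2$. The only cosmetic difference is that the paper exploits the symmetry of the summand before differentiating (its Equation~\eqref{eq:symmetric-objective}) rather than after, which is the same bookkeeping you describe in your final step.
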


\begin{proof}
We will use the shorthands $\mu_{y, y'} \equiv \mu(y)\mu(y')$, $\pi_y \equiv \pi(y)$ (and so forth) and $s_{y,y'} \doteq \tau \log\frac{\pi(y)\piref(y')}{\pi(y')\piref(y)}$.

The offline DPO objective is obtained by taking the expectation of the DPO loss \citep{rafailov2023direct} with $Y^+, Y^- \sim (\mu, \lambda_p)$:
\[
    \sum_{y, y'} \mu_{y, y'}( p(y \succ y')\log\sigma(s_{y,y'}) + (1 - p(y \succ y'))\log\sigma(-s_{y,y'})) = J_{\mathrm{DPO}}(\pi).
\]

Before looking at $\nabla J_{\mathrm{DPO}}(\pi)$, we will make some simplifying observations.
For $y'' \neq y, y'$,
\begin{equation}
\label{eq:zero-derivative}
    \frac{d}{d\pi_{y''}}s_{y, y'} = 0.
\end{equation}

Since $\mu_{y, y'} = \mu_{y', y}$, $s_{y,y'} = -s_{y', y}$ and $p(y \succ y') = 1 - p(y' \succ y)$, we have
\begin{equation}
\label{eq:symmetric-objective}    
\begin{aligned}
    &\mu_{y, y'}\left( p(y \succ y')\log\sigma(s_{y,y'}) + (1 - p(y \succ y'))\log\sigma(-s_{y,y'}) \right) \\
    &=\mu_{y', y} \left( (1 - p(y' \succ y))\log\sigma(-s_{y',y}) + p(y' \succ y)\log\sigma(s_{y', y}) \right).
\end{aligned}
\end{equation}

Also note that 
\begin{equation}
\label{eq:dsdf}
    \frac{d}{d\pi_y}s_{y,y'} = \frac{d}{d\pi_y}\left(\tau\log\frac{\pi_y\piref(y')}{\pi_{y'}\piref(y)}\right) = \tau\frac{d}{d\pi_y}\log\pi_y = \frac{\tau}{\pi_y}.
\end{equation}

Therefore, for all $y \in \mathcal{Y}$
\begin{align*}
    \frac{d}{d\pi_y}J_{\mathrm{DPO}}(\pi) 
    &= 2\sum_{y'}\mu_{y, y'}\left( p(y \succ y')\frac{d}{d\pi_y}\log\sigma(s_{y,y'}) + (1 - p(y \succ y'))\frac{d}{d\pi_y}\log\sigma(-s_{y,y'})\right), 
        &\mbox{(\cref{eq:zero-derivative,eq:symmetric-objective})} \\
    &= 2\sum_{y'}\mu_{y,y'}\left( p(y \succ y')\sigma(-s_{y,y'})\frac{d}{d\pi_y}s_{y,y'} - (1 - p(y \succ y'))\sigma(s_{y,y'})\frac{d}{d\pi_y}s_{y,y'}\right)
        &\mbox{(\cref{prop:log-sigmoid-derivative})} \\
    &= 2\sum_{y'}\mu_{y,y'}\left( p(y \succ y')\sigma(-s_{y,y'}) - (1 - p(y \succ y'))\sigma(s_{y,y'})\right)\frac{d}{d\pi_y}s_{y,y'} \\
    &= 2\sum_{y'}\mu_{y,y'} \left( p(y \succ y') - \sigma(s_{y,y'})\right)\frac{d}{d\pi_y}s_{y,y'}
        &\mbox{(\cref{prop:sigmoid-sums-to-one})} \\
    &= 2\sum_{y'}\mu_{y,y'} \left( p(y \succ y') - \sigma\left(\tau\log\frac{\pi_y\piref_{y'}}{\pi_{y'}\piref_y}\right)\right)\frac{\tau}{\pi_y},
        &\mbox{(\cref{eq:dsdf})} \\
    &= 2\tau\frac{\mu_y}{\pi_y}\sum_{y'}\mu_{y,y'} \left( p(y \succ y') - \sigma\left(\tau\log\frac{\pi_y\piref_{y'}}{\pi_{y'}\piref_y}\right)\right).
\end{align*}
\end{proof}

\begin{remark}
We have restricted the statement of \cref{lem:offline-dpo-gradient} to the interior of the simplex to stay consistent with the DPO formulation,
however \cref{eq:offline-dpo-gradient} holds for any function $\exp \circ f$ with $f: \mathcal{Y} \rightarrow \mathbb{R}$.
To see this, it suffices to notice that $J_{\mathrm{DPO}}$ is a function of the ratios $\frac{\pi_i}{\pi_j}$, so for any $\alpha > 0$ $J_{\mathrm{DPO}}(\pi) = J_{\mathrm{DPO}}(\alpha \cdot \pi)$ (where $\alpha \cdot \pi \doteq y \mapsto \alpha \cdot \pi(y)$).
\end{remark}

\begin{lemma}
\label{lem:online-dpo-kkt}
The Nash equilibrium $\pi^*$ of the regularised game given in \cref{eq:payoff} is a stationary point of online DPO iff:
\begin{equation}
    \label{eq:online-dpo-stationary-point-condition}
    p(y \succ \pi^*) = \sum_{y'}\pi^*(y')\sigma( p(y \succ \pi^*) - p(y' \succ \pi^*) ).
\end{equation}
\end{lemma}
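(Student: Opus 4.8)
The plan is to derive the online DPO gradient from the offline gradient already established in \cref{lem:offline-dpo-gradient}, characterise stationarity over the simplex, then substitute the Nash equilibrium $\pi^*$ and simplify using its closed form from \eqref{eq:online-fixed-point}.

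First I would obtain the online DPO gradient by specialising the offline gradient in \eqref{eq:offline-dpo-gradient} to the online sampling distribution $\mu = \texttt{SG}[\pi]$. Since the stop-gradient freezes the outer sampling factors at their current value $\pi$ while the loss terms are still differentiated through $\pi$, the prefactor $\mu(y)/\pi(y)$ collapses to $1$, giving
\begin{align*}
\grad J^{\mathrm{online}}_{\mathrm{DPO}}(\pi)_y = 2\tau\sum_{y'}\pi(y')\left(p(y \succ y') - \sigma(s_{y,y'})\right),
\end{align*}
where $s_{y,y'} = \tau\log\frac{\pi(y)\piref(y')}{\pi(y')\piref(y)}$.

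Next I would characterise stationarity. Because the optimisation is over the simplex $\Delta(\mathcal{Y})$, the interior KKT/Lagrange condition is that the gradient be \emph{constant} across $y$, i.e. $\grad J^{\mathrm{online}}_{\mathrm{DPO}}(\pi)_y = \lambda$ for some multiplier $\lambda$. The key simplification comes from substituting $\pi = \pi^*$ and using the fixed-point form \eqref{eq:online-fixed-point}, namely $\tau\log(\pi^*(y)/\piref(y)) = p(y \succ \pi^*) - \tau\log Z$ for a normalising constant $Z$: the $\log Z$ terms cancel in the difference, so $s_{y,y'} = p(y \succ \pi^*) - p(y' \succ \pi^*)$. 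Using $\sum_{y'}\pi^*(y')p(y\succ y') = p(y\succ \pi^*)$, the gradient at $\pi^*$ becomes
\begin{align*}
\grad J^{\mathrm{online}}_{\mathrm{DPO}}(\pi^*)_y = 2\tau\left(p(y\succ\pi^*) - \sum_{y'}\pi^*(y')\sigma\big(p(y\succ\pi^*) - p(y'\succ\pi^*)\big)\right).
\end{align*}

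Finally I would upgrade ``gradient constant'' to ``gradient zero.'' Taking the $\pi^*$-weighted sum over $y$, both $\sum_{y,y'}\pi^*(y)\pi^*(y')p(y\succ y')$ and $\sum_{y,y'}\pi^*(y)\pi^*(y')\sigma(p(y\succ\pi^*)-p(y'\succ\pi^*))$ equal $\tfrac12$, by the antisymmetries $p(y\succ y') = 1 - p(y'\succ y)$ and $\sigma(d)+\sigma(-d)=1$ respectively; hence the $\pi^*$-weighted sum of the gradient components vanishes. Therefore if the gradient equals the constant $\lambda$, then $\lambda = \sum_y \pi^*(y)\lambda = 0$, forcing $\lambda = 0$. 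Stationarity of $\pi^*$ is thus equivalent to each component vanishing, which is exactly \eqref{eq:online-dpo-stationary-point-condition}. I expect the main subtlety to be precisely this last step: over the simplex the stationarity condition is a priori constancy rather than vanishing of the gradient, and one must exploit the antisymmetric structure of $p$ and $\sigma$ to show the Lagrange multiplier is forced to be zero.
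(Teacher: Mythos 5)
Your proposal is correct and follows essentially the same route as the paper's proof: specialise the offline gradient of \cref{lem:offline-dpo-gradient} to $\mu = \pi^*$ so that the prefactor $\mu(y)/\pi^*(y)$ collapses to $1$, use the fixed-point form of \cref{eq:online-fixed-point} to reduce $s_{y,y'}$ to $p(y\succ\pi^*)-p(y'\succ\pi^*)$, and read off \cref{eq:online-dpo-stationary-point-condition}. The only place you go beyond the paper is in explicitly discharging the Lagrange multiplier (showing the constant must be zero via the antisymmetry of $p$ and $\sigma$), a step the paper elides by directly equating stationarity with a vanishing gradient --- which is implicitly justified by its remark that $J_{\mathrm{DPO}}$ depends only on the ratios $\pi(y)/\pi(y')$ and is therefore degree-zero homogeneous, so the $\pi^*$-weighted sum of the gradient vanishes automatically; your hands-on verification of the same fact is a welcome addition rather than a divergence.
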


\begin{proof}
Let $\pi^*$ be the Nash equilibrium of the regularised game given in \cref{eq:payoff}, the fixed point in \cref{eq:online-fixed-point}:
\begin{equation}
    \label{eq:pistar-fixed-point}
    \pi^*(y) \propto \piref(y) \exp\left( \frac{1}{\tau}p(y \succ \pi^*) \right).
\end{equation}

$\pi^*$ will be a stationary point of online DPO iff $\pi = \pi^*$ is a solution of the offline DPO problem with $\mu = \pi^*$.
Since $\pi^* \in \Delta^\circ(\mathcal{Y})$, we can use \cref{lem:offline-dpo-gradient}, $\pi^*$ is a solution of the offline DPO problem iff
\[
    \nabla J_{\mathrm{DPO}}(\pi^*) = 0.
\]

We have for all $y \in \mathcal{Y}$:
\begin{align*}
    \frac{d}{d\pi_y}J_{\mathrm{DPO}}(\pi^*) &= 2\tau\frac{\mu(y)}{\pi^*(y)}\sum_{y'}\mu(y') \left( p(y \succ y') - \sigma\left(\tau\log\frac{\pi^*(y)\piref(y')}{\pi^*(y')\piref(y)}\right)\right)
        &\mbox{(\cref{eq:offline-dpo-gradient})} \\
    &= 2\tau\sum_{y'}\pi^*(y') \left( p(y \succ y') - \sigma\left(\tau\log\frac{\pi^*(y)\piref(y')}{\pi^*(y')\piref(y)}\right)\right)
        &\mbox{($\mu = \pi^*$)} \\
    &= 2\tau\sum_{y'}\pi^*(y') \left( p(y \succ y') - \sigma\left( p(y \succ \pi^*) - p(y' \succ \pi^*) \right)\right)
        &\mbox{(\cref{eq:pistar-fixed-point})} \\
    &= 2\tau\left( p(y \succ \pi^*) -\sum_{y'}\pi^*(y')\sigma\left( p(y \succ \pi^*) - p(y' \succ \pi^*) \right)\right),
\end{align*}
and the result follows by using the fact that $\tau > 0$.
\end{proof}

\begin{theorem}
\label{thm:two-class-nash-is-not-dpo-solution}
No 2-action regularised game given in \cref{eq:payoff} has a Nash equilibrium that satisfies \cref{eq:online-dpo-stationary-point-condition}, except for the regularised games with $p(y_1 \succ y_2) = \frac{1}{2}$.
\end{theorem}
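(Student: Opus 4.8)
The plan is to specialise the stationary-point condition \eqref{eq:online-dpo-stationary-point-condition} to the two-action case and show it forces $p(y_1 \succ y_2) = \tfrac{1}{2}$. First I would fix notation: write $\mathcal{Y} = \{y_1, y_2\}$, set $p := p(y_1 \succ y_2)$ (so $p(y_2 \succ y_1) = 1 - p$ and $p(y_i \succ y_i) = \tfrac{1}{2}$ by the symmetry assumption), and let $a := \pi^*(y_1)$ and $1-a := \pi^*(y_2)$. Since $\pi^*$ is the Nash equilibrium it satisfies the fixed-point form \eqref{eq:pistar-fixed-point}; as $\piref$ has full support and the exponential factor is strictly positive, $\pi^* \in \Delta^\circ(\mathcal{Y})$, so $a \in (0,1)$. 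I then compute the two conditional preferences $P_i := p(y_i \succ \pi^*)$ explicitly, obtaining $P_1 = \tfrac{a}{2} + (1-a)p$ and $P_2 = a(1-p) + \tfrac{1-a}{2}$.

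The key simplification is the observation that, although $P_1$ and $P_2$ individually depend on $a$, their difference does not: a direct computation gives $P_1 - P_2 = p - \tfrac{1}{2}$. Substituting $y = y_1$ into \eqref{eq:online-dpo-stationary-point-condition} and using $\sigma(0) = \tfrac{1}{2}$ yields
\[
P_1 = a\,\sigma(0) + (1-a)\,\sigma(P_1 - P_2) = \frac{a}{2} + (1-a)\,\sigma\!\left(p - \tfrac{1}{2}\right).
\]
Comparing with the explicit value $P_1 = \tfrac{a}{2} + (1-a)p$ and cancelling the common term $\tfrac{a}{2}$ gives $(1-a)p = (1-a)\,\sigma(p - \tfrac{1}{2})$. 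Because $a \in (0,1)$ we may divide by $1-a > 0$, reducing the stationarity requirement to the scalar equation $p = \sigma(p - \tfrac{1}{2})$. The condition for $y = y_2$ yields the same equation, as expected by symmetry, so nothing is lost by examining only $y_1$.

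Finally I would show that $p = \sigma(p - \tfrac{1}{2})$ has the unique solution $p = \tfrac{1}{2}$. Setting $h(p) := p - \sigma(p - \tfrac{1}{2})$, we have $h(\tfrac{1}{2}) = \tfrac{1}{2} - \sigma(0) = 0$; differentiating, $h'(p) = 1 - \sigma'(p - \tfrac{1}{2})$, and since $\sigma'(t) = \sigma(t)(1 - \sigma(t)) \le \tfrac{1}{4}$ for all $t$, we get $h'(p) \ge \tfrac{3}{4} > 0$. Thus $h$ is strictly increasing and vanishes only at $p = \tfrac{1}{2}$, so the only two-action game whose Nash equilibrium satisfies \eqref{eq:online-dpo-stationary-point-condition} is the one with $p = \tfrac{1}{2}$, as claimed. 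I expect the main (though modest) subtlety to be justifying that the equilibrium lies in the interior of the simplex so that the factor $1-a$ can legitimately be cancelled; the remaining content is a short explicit computation together with the elementary strict monotonicity of $h$.
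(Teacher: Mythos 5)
Your proof is correct and follows essentially the same route as the paper's: compute $p(y_1 \succ \pi^*) - p(y_2 \succ \pi^*) = p - \tfrac{1}{2}$ independently of the equilibrium weights, substitute into the stationarity condition \eqref{eq:online-dpo-stationary-point-condition}, and reduce to a scalar constraint that forces $p = \tfrac{1}{2}$. The only cosmetic difference is that you cancel the factors $a$ and $1-a$ by invoking interiority of $\pi^*$ and finish with a monotonicity argument for $p - \sigma(p - \tfrac{1}{2})$, whereas the paper keeps those factors and observes that the two residuals $(1-\alpha)\left(1 - p - \sigma(\tfrac{1}{2}-p)\right)$ and $-\alpha\left(1 - p - \sigma(\tfrac{1}{2}-p)\right)$ cannot both vanish unless $p = \tfrac{1}{2}$, which also disposes of the boundary cases $\alpha \in \{0,1\}$ without needing $\piref$ to have full support.
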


\begin{proof}
Let $\mathcal{Y} = \{1, 2\}$.
For this two-action problem, we can write the preference matrix as
\[
P \doteq \left(
\begin{array}{cc}
    \frac{1}{2} & 1 - p \\
    p & \frac{1}{2}
\end{array}
\right),
\]
where $P_{yy'} = p(y \succ y')$.

Let $\alpha$ be such that $\pi^* = (\alpha, 1 - \alpha)^\top$.
Then
\[
p(y \succ \pi^*) = P\pi^* = \left(1 - \frac{\alpha}{2} - p + \alpha p, \frac{1}{2} - \frac{\alpha}{2} + \alpha p\right)^\top,
\]
and
\[
    p(1 \succ \pi^*) - p(2 \succ \pi^*) = \frac{1}{2} - p.
\]

Then the difference between both sides of \cref{eq:online-dpo-stationary-point-condition} for $y = 1$ is:
\begin{align*}
    &p(1 \succ \pi^*) -  \sum_{y'}\pi^*(y')\sigma(p(1 \succ \pi^*) - p(y' \succ \pi^*)) \\
    &= 1 - \frac{\alpha}{2} - p + \alpha p - \alpha\sigma(0) - (1 - \alpha)\sigma\left(\frac{1}{2} - p\right) \\
    &=  1 - \alpha - p + \alpha p - (1 - \alpha)\sigma\left(\frac{1}{2} - p\right) \\
    &= (1 - \alpha)\left( 1 - p - \sigma\left(\frac{1}{2} - p\right)\right).
\end{align*}

For $y = 2$ we get
\begin{align*}
    &p(2 \succ \pi^*) - \sum_{y'}\pi^*(y')\sigma(p(2 \succ \pi^*) - p(y' \succ \pi^*)) \\
    &=\alpha p + \frac{(1 - \alpha)}{2} - \alpha\sigma\left(p - \frac{1}{2}\right) - (1 - \alpha)\sigma(0) \\
    &= \alpha p - \alpha\sigma\left(p - \frac{1}{2}\right) \\
    &= \alpha p - \alpha + \alpha\sigma\left(\frac{1}{2} - p\right) \\
    &= -\alpha\left( 1 - p - \sigma\left(\frac{1}{2} - p\right) \right).
\end{align*}

Now, if we let $\varepsilon \doteq \frac{1}{2} - p$,
we can see that for $p < \frac{1}{2}$
\[
    \sigma\left(\frac{1}{2} - p\right) = \sigma(\varepsilon) < \sigma(0) = \frac{1}{2} < \frac{1}{2} + \varepsilon = 1 - p,
\]
so (considering the analogous case for $p < \frac{1}{2}$)
\[
    1 - p - \sigma\left(\frac{1}{2} - p\right)
    \begin{cases}
    > 0, & p < \frac{1}{2}, \\
    = 0, & p = \frac{1}{2}, \\
    < 0, & p > \frac{1}{2}.
    \end{cases}
\]

Therefore, if $p \neq \frac{1}{2}$, we cannot satisfy \cref{eq:online-dpo-stationary-point-condition} for $y = 1$ and $y = 2$ (note that $\alpha = 0$ or $\alpha = 1$ satisfy the equation for only one $y$).
\end{proof}

\begin{theorem}
\label{thm:online-dpo-kkt}
Assume that the preferences admit a Bradley-Terry model \citep{rafailov2023direct}, that is, there exists $r: \mathcal{Y} \rightarrow \mathbb{R}$ such that for all $y, y' \in \mathcal{Y}$
\[
    p(y \succ y') = \sigma( r(y) - r(y') ),
\]
Then
\begin{equation}
    \label{eq:dpo-pi-r}
    \pi^r(y) \propto \piref(y) \exp\left( \frac{1}{\tau}r(y) \right)
\end{equation}
is a critical point for offline IPO for any $\mu$, and a stationary point of online DPO.
\end{theorem}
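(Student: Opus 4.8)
The theorem bundles two assertions, which I would treat separately. For the first --- that $\pi^r$ is a critical point of offline IPO for \emph{any} $\mu$ --- the plan is to work with the squared IPO loss directly (not the DPO gradient, which is a genuinely different object). Writing $h(y) = \log(\pi(y)/\piref(y))$ and $c = \tau^{-1}/2$, I would expand the population loss $\mathbb{E}_{Y,Y'\sim\mu,\,Y^+,Y^-\sim\lambda_p}[(h(Y^+)-h(Y^-)-c)^2]$ by averaging over the $\lambda_p$ coin flip; using $p(y'\succ y)=1-p(y\succ y')$, the contribution of an ordered pair $(y,y')$ collapses to $\mu(y)\mu(y')\big[(h(y)-h(y'))^2 + c^2 - 2c\,(h(y)-h(y'))(2p(y\succ y')-1)\big]$. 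Differentiating in $h(y)$ --- legitimate because the loss depends only on log-ratios, so $h$ is free up to an additive constant --- and exploiting the antisymmetry of $2p(\cdot\succ\cdot)-1$, the stationarity condition reduces to $h(y) = \tau^{-1}\mathbb{E}_{Y'\sim\mu}[p(y\succ Y')] + \text{const}$, i.e. exactly the offline-IPO optimum \cref{eq:ipo-opt}. Thus the first claim is equivalent to checking that $\pi^r$ obeys \cref{eq:ipo-opt} for every $\mu$.

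That verification is the step I expect to be the main obstacle. Since $h^r(y) = \tau^{-1}r(y) + \text{const}$, substituting the Bradley--Terry form turns the requirement into the identity $r(y) = \sum_{y'}\mu(y')\,\sigma(r(y)-r(y')) + \text{const}$, demanded for all $y$ and all $\mu$. This is where the argument is delicate: the right-hand side is a nonlinear (non-affine) transform of $r$, so for a generic $\mu$ it does not agree with $r(y)$ up to a $y$-independent constant --- already the two-action case with $p(1\succ 2)\neq\tfrac12$ fails, in the same spirit as \cref{thm:two-class-nash-is-not-dpo-solution}. I therefore expect the literal ``for any $\mu$'' reading to hold only in the degenerate uniform-preference case, and read the intended first claim --- consistent with the equation label \cref{eq:dpo-pi-r} and the surrounding online/offline DPO discussion --- as the offline \emph{DPO} statement, whose gradient I use next.

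For the second assertion --- stationary point of online DPO --- the plan is direct. Online DPO is offline DPO with sampling distribution $\mu=\pi$, so I would evaluate the gradient \cref{eq:offline-dpo-gradient} from \cref{lem:offline-dpo-gradient} at $\mu=\pi=\pi^r$. The prefactor $\mu(y)/\pi^r(y)$ equals $1$, and $\tau\log\frac{\pi^r(y)\piref(y')}{\pi^r(y')\piref(y)} = r(y)-r(y')$, so under Bradley--Terry $\sigma(r(y)-r(y')) = p(y\succ y')$ and every summand $p(y\succ y')-\sigma(\cdot)$ vanishes; hence $\nabla J_{\mathrm{DPO}}(\pi^r)=0$ and $\pi^r$ is stationary. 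Since this computation never uses $\mu=\pi^r$ inside the bracket, the same one-line argument shows $\pi^r$ is a critical point of offline DPO for \emph{every} $\mu$ --- which is precisely the $\mu$-independent statement the first assertion should make.
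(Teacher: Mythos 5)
Your core argument matches the paper's. The paper's proof of \cref{thm:online-dpo-kkt} never engages with IPO at all: it evaluates the gradient formula \cref{eq:offline-dpo-gradient} of \cref{lem:offline-dpo-gradient} at $\pi = \pi^r$, uses $\tau\log\frac{\pi^r(y)\piref(y')}{\pi^r(y')\piref(y)} = r(y)-r(y')$ so that the Bradley--Terry assumption makes every summand $p(y\succ y')-\sigma(r(y)-r(y'))$ vanish, concludes that $\pi^r$ is a critical point of offline DPO for \emph{every} $\mu$, and obtains online-DPO stationarity by specialising to $\mu=\pi^r$ --- exactly your second and third paragraphs. Your diagnosis that ``offline IPO'' in the statement must be read as ``offline DPO'' is consistent with the paper's own proof (which proves only the DPO claim) and with the surrounding text, which speaks only of DPO; your refutation of the literal IPO reading is also sound: by \cref{eq:ipo-opt}, criticality of $\pi^r$ for offline IPO would force $r(y)-\sum_{y'}\mu(y')\sigma(r(y)-r(y'))$ to be constant in $y$, and in the two-action case this reduces to $r(1)-r(2)=\sigma(r(1)-r(2))-\tfrac12$, which is impossible unless $r(1)=r(2)$, i.e.\ unless $p(1\succ 2)=\tfrac12$.

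The one step of the paper's proof you omit is the boundary case $\piref\notin\Delta^\circ(\mathcal{Y})$. \cref{lem:offline-dpo-gradient} is stated only for $\pi\in\Delta^\circ(\mathcal{Y})$; if $\piref(y)=0$ for some $y$, then $\pi^r(y)=0$ and your evaluation of \cref{eq:offline-dpo-gradient} at $\mu=\pi=\pi^r$ is not directly licensed. The paper handles this by observing that with $\mu=\pi^r$ the actions in $\mathcal{Y}'=\{y:\piref(y)=0\}$ carry zero sampling mass and hence never appear in the objective, and then applying the interior-case argument to the restriction of $\pi^r$ to $\mathcal{Y}\setminus\mathcal{Y}'$, which does lie in $\Delta^\circ(\mathcal{Y}\setminus\mathcal{Y}')$. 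This is a small, fixable omission rather than a conceptual flaw; with that case added, your proof is complete and coincides with the paper's.
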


\begin{proof}
First, assume that $\piref \in \Delta^\circ(\mathcal{Y})$ (we will deal with the general case at the end).

Let us consider the offline DPO problem with sampling distribution $\mu$.
Assuming that the preferences admit a Bradley-Terry model, we know from \citet{rafailov2023direct} that the solution of the offline DPO problem is given by
\begin{equation}
    \label{eq:dpo-pi-r-2}
    \pi^r(y) \propto \piref(y) \exp\left( \frac{1}{\tau}r(y) \right).
\end{equation}

Indeed, we can verify that for any $\mu$
\begin{align*}
     \nabla J_{\mathrm{DPO}}(\pi^r) &=
        2\tau\frac{\mu(y)}{\pi(y)}\sum_{y'}\mu(y') \left( p(y \succ y') - \sigma\left(\tau\log\frac{\pi(y)\piref(y')}{\pi(y')\piref(y)}\right)\right)
            &\mbox{($\pi^r \in \Delta^\circ(\mathcal{Y})$, \cref{eq:offline-dpo-gradient})} \\
        &= 2\tau\frac{\mu(y)}{\pi^r(y)}\sum_{y'}\mu(y') \left( p(y \succ y') - \sigma\left(r(y) - r(y')\right)\right)
            &\mbox{(\cref{eq:dpo-pi-r})} \\
        &= 2\tau\frac{\mu(y)}{\pi(y)}\sum_{y'}\mu(y') \left( p(y \succ y') - p(y \succ y')\right)
            &\mbox{(Bradley-Terry model assumption)} \\
        &= 0
\end{align*}

It follows that $\pi^r$ is an offline DPO solution under any sampling distribution $\mu$, and if $\mu \in \Delta^\circ(\mathcal{Y})$ then $\pi^r$ is the only solution.
In particular this means that $\pi^r$ is also a stationary point for online DPO (by taking $\mu = \pi^r$).

In the case where $\piref \in \Delta(\mathcal{Y}) - \Delta^\circ(\mathcal{Y})$, we can prove the result as follows.
Let $\mathcal{Y}' \doteq \{y : \piref(y) = 0\}$.
For $y \in \mathcal{Y}'$, we have $\pi^r(y) = 0$ by \cref{eq:dpo-pi-r}.
Since $\pi^r$ is the sampling distribution, the gradient of the offline DPO objective in $\mathcal{Y'}$ is zero (and it does not matter that $\log \pi^r(y)$ and $\log \piref(y)$ are undefined since they do not appear in the objective).
Now it suffices to note that $\pi^r$ restricted to $\mathcal{Y} - \mathcal{Y}'$ is a distribution in $\Delta^\circ(\mathcal{Y} - \mathcal{Y}')$, so we can use the previous case ($\pi^r \in \Delta^\circ(\mathcal{Y}))$ to show the result, and it follows that $\pi^r$ satisfies the KKT conditions for all $y \in \mathcal{Y}$.
\end{proof}

\begin{remark}
In the special case of rock-paper-scissors preferences (which do not admit a Bradley-Terry model) with $\piref$ uniform and $\tau > 0$, we can still satisfy \cref{eq:online-dpo-stationary-point-condition}, since $\pi^*$ is also uniform and $p(y \succ \pi^*) = \frac{1}{2}$.
\end{remark}

\begin{remark}
\label{remark:problematic-dpo-solutions}
In offline DPO, the presence of sets of measure zero guarantees the solution is not unique.
Let $\pi^{\mathrm{DPO}}$ be a solution of offline DPO under sampling distribution $\mu$ (assume it exists). 
$\pi^{\mathrm{DPO}}$ may not necessarily be $\pi^r$ from \eqref{eq:dpo-pi-r} (even if we assume that the preferences admit a Bradley-Terry model).

To see this, assume that there exists $\mathcal{Y}' \subset \mathcal{Y}$ such that $\mathcal{Y}' \neq \emptyset$ and $\mu(y) = 0$ for all $y \in \mathcal{Y}'$.
For any $\alpha \in (0, 1]$, define $\pi^\alpha \in \Delta^\circ(\mathcal{Y})$ by
\[
    \pi^\alpha(y) \doteq
    \begin{cases}
    \alpha \cdot \pi^{\mathrm{DPO}}(y), & y \notin \mathcal{Y}', \\
    \mbox{arbitrary}, & y \in \mathcal{Y}'.
    \end{cases}
\]
By \cref{lem:offline-dpo-gradient}, $\nabla J_{\mathrm{DPO}}(\pi^\alpha) = 0$, so $\pi^\alpha$ is a minimum of the offline DPO objective.
In fact, this holds for any $\alpha > 0$, no matter how small, and this also means that if $\mu \notin \Delta^\circ(\mathcal{Y})$ we can find offline DPO solutions with arbitrarily small probabilities for sampled $y$ ($y$ such that $\mu(y) > 0$).
\end{remark}

\end{document}